\newcommand{\ie}{\textit{i}.\textit{e}.}
\newcommand{\eg}{\textit{e}.\textit{g}.}
\newtheorem{definition}{Definition}
\newtheorem{remark}{Remark}
\newtheorem{lemma}{Lemma}
\newtheorem{theorem}{Theorem}
\newtheorem{example}{Example}
\newtheorem{assumption}{Assumption}
\newtheorem{proposition}{Proposition}
\begin{document}
%
% paper title
% Titles are generally capitalized except for words such as a, an, and, as,
% at, but, by, for, in, nor, of, on, or, the, to and up, which are usually
% not capitalized unless they are the first or last word of the title.
% Linebreaks \\ can be used within to get better formatting as desired.
% Do not put math or special symbols in the title.
\title{Conflict-free Cooperation Method for Connected and Automated Vehicles at Unsignalized Intersections: Graph-based Modeling and \\Optimality Analysis}
%
%
% author names and IEEE memberships
% note positions of commas and nonbreaking spaces ( ~ ) LaTeX will not break
% a structure at a ~ so this keeps an author's name from being broken across
% two lines.
% use \thanks{} to gain access to the first footnote area
% a separate \thanks must be used for each paragraph as LaTeX2e's \thanks
% was not built to handle multiple paragraphs
%
%~\IEEEmembership{Student Member,~IEEE,}
%~\IEEEmembership{Member,~IEEE,}
\author{
	Chaoyi Chen$^{1}$,
	Qing Xu$^{*,1}$,
	Mengchi Cai$^{1}$,
	Jiawei Wang$^{1}$,
	Jianqiang Wang$^{1}$, 
	Keqiang Li$^{1}$.
%	Michael~Shell,~\IEEEmembership{Member,~IEEE,}
%        John~Doe,~\IEEEmembership{Fellow,~OSA,}
%        and~Jane~Doe,~\IEEEmembership{Life~Fellow,~IEEE}% <-this % stops a space
%\thanks{M. Shell was with the Department
%of Electrical and Computer Engineering, Georgia Institute of Technology, Atlanta,
%GA, 30332 USA e-mail: (see http://www.michaelshell.org/contact.html).}% <-this % stops a space
%\thanks{J. Doe and J. Doe are with Anonymous University.}% <-this % stops a space
%\thanks{Manuscript received April 19, 2005; revised August 26, 2015.}
	\thanks{This work was supported by the National Key Research and Development Program of China under Grant 2018YFE0204302, National Natural Science Foundation of China under Grant 52072212, Key Algorithms in Intelligent and Connected Cloud Control System for China Intelligent and Connected Vehicles Research Institute (CICV), Tsinghua University-Didi Joint Research Center for Future Mobility and Tsinghua-Toyota Joint Research Institute Cross-discipline Program..}
	\thanks{$^{1}$Chaoyi Chen, Qing Xu, Mengchi Cai, Jianqiang Wang and Keqiang Li are with School of Vehicle and Mobility, Tsinghua University}%	, Beijing 100084, P. R. China
%	\thanks{$^{2}$Biao Xu is with College of Mechanical and Vehicle Engineering, Hunan University}
%	\thanks{$^{3}$Xiangbin Wu is with Intel Lab China}
	\thanks{*Corresponding author: Qing Xu, Email address:qingxu@tsinghua.edu.cn}
}

% note the % following the last \IEEEmembership and also \thanks - 
% these prevent an unwanted space from occurring between the last author name
% and the end of the author line. i.e., if you had this:
% 
% \author{....lastname \thanks{...} \thanks{...} }
%                     ^------------^------------^----Do not want these spaces!
%
% a space would be appended to the last name and could cause every name on that
% line to be shifted left slightly. This is one of those "LaTeX things". For
% instance, "\textbf{A} \textbf{B}" will typeset as "A B" not "AB". To get
% "AB" then you have to do: "\textbf{A}\textbf{B}"
% \thanks is no different in this regard, so shield the last } of each \thanks
% that ends a line with a % and do not let a space in before the next \thanks.
% Spaces after \IEEEmembership other than the last one are OK (and needed) as
% you are supposed to have spaces between the names. For what it is worth,
% this is a minor point as most people would not even notice if the said evil
% space somehow managed to creep in.

% The paper headers
\markboth{IEEE Transactions on Intelligent Transportation Systems,~Vol.~14, No.~8, April~2022}%
{}
%{Shell \MakeLowercase{\textit{et al.}}: Bare Demo of IEEEtran.cls for IEEE Journals}

% The only time the second header will appear is for the odd numbered pages
% after the title page when using the twoside option.
% 
% *** Note that you probably will NOT want to include the author's ***
% *** name in the headers of peer review papers.                   ***
% You can use \ifCLASSOPTIONpeerreview for conditional compilation here if
% you desire.

% If you want to put a publisher's ID mark on the page you can do it like
% this:
%\IEEEpubid{0000--0000/00\$00.00~\copyright~2015 IEEE}
% Remember, if you use this you must call \IEEEpubidadjcol in the second
% column for its text to clear the IEEEpubid mark.

% use for special paper notices
%\IEEEspecialpapernotice{(Invited Paper)}

% make the title area
\maketitle

% As a general rule, do not put math, special symbols or citations
% in the abstract or keywords.
\begin{abstract}
Connected and automated vehicles have shown great potential in improving traffic mobility and reducing emissions, especially at unsignalized intersections. Previous research has shown that vehicle passing order is the key influencing factor in improving intersection traffic mobility. In this paper, we propose a graph-based cooperation method to formalize the conflict-free scheduling problem at an unsignalized intersection. Based on graphical analysis, a vehicle's trajectory conflict relationship is modeled as a conflict directed graph and a coexisting undirected graph. Then, two graph-based methods are proposed to find the vehicle passing order. The first is an improved depth-first spanning tree algorithm, which aims to find the local optimal passing order vehicle by vehicle. The other novel method is a minimum clique cover algorithm, which identifies the global optimal solution. Finally, a distributed control framework and communication topology are presented to realize the conflict-free cooperation of vehicles. Extensive numerical simulations are conducted for various numbers of vehicles and traffic volumes, and the simulation results prove the effectiveness of the proposed algorithms.
\end{abstract}

% Note that keywords are not normally used for peerreview papers.
\begin{IEEEkeywords}
Connected and Automated Vehicle, Unsignalized intersection, Conflict-free cooperation, Distributed control  
\end{IEEEkeywords}

% For peer review papers, you can put extra information on the cover
% page as needed:
% \ifCLASSOPTIONpeerreview
% \begin{center} \bfseries EDICS Category: 3-BBND \end{center}
% \fi
%
% For peerreview papers, this IEEEtran command inserts a page break and
% creates the second title. It will be ignored for other modes.
\IEEEpeerreviewmaketitle

\section{INTRODUCTION}
	Intersections are the most common merging points in urban traffic scenarios~\cite{xu2021coordinated}. According to the Federal Highway Administration, more than 2.8 million intersection-related crashes occur in the US each year, accounting for 44\% of all crashes~\cite{azimi2014stip}. The rapid development of the V2X technology provides an opportunity to improve traffic mobility and safety in intersection management~\cite{contreras2017internet}. {Through vehicle-to-vehicle (V2V) and vehicle-to-infrastructure (V2I) communication, a centralized controller is deployed at the intersection to coordinate the connected and automated vehicles (CAVs) to pass through the intersection; this guarantees conflict-free vehicle cooperation.} Hence, CAVs have immense potential in improving traffic safety and mobility at intersections~\cite{zheng2020smoothing, chen2021mixed}.

%	the traditional traffic light becomes unnecessary since all CAVs share their driving information in real time.
In a fully autonomous intersection scenario, a hierarchical framework is frequently used to realize the CAV coordination~\cite{xu2017v2i}. First, after collecting the information of the CAVs in real-time, the centralized controller schedules the arrival time of the CAVs to improve the traffic efficiency. Then, a distributed controller is applied to the CAV to implement the determined arrival plan. Multiple methods have been proposed to address the vehicle control problem,~\eg, fuzzy logic~\cite{milanes2009controller, onieva2012genetic}, model predictive control~\cite{du2018hierarchical, he2015optimal}, and optimal control~\cite{malikopoulos2018decentralized, zhang2017decentralized}. Moreover, the virtual platoon method was used in~\cite{xu2018distributed,chavoshi2021pairing}, which projects the CAVs from different lanes into a virtual lane. Thus, the typical vehicle platoon analysis methods~\cite{zheng2015stabilitymargin} can be used in the control problem of the CAVs on different lanes as if they are traveling in the same lane.

%The early research on intersections mainly focus on signalized intersection optimization. Several methods have been proposed to adjust the traffic signal to reduce traffic congestion, including optimal control~\cite{de1998optimal}, neurofuzzy network~\cite{bingham2001reinforcement} and Q-learning~\cite{wiering2000multi}. 
%Recently, more researchers focus on the cooperation of traffic signal lights and CAVs. With V2I technology applied, CAVs are capable of acquiring the upcoming traffic light phases and utilize Model Predictive Control (MPC) to optimize the speed trajectory~\cite{asadi2010predictive}. A hierarchical structure is further proposed to decouple the traffic phasing time and CAV driving strategy~\cite{xu2017v2i}.
%partially observable Markov decision process (POMDP)~\cite{bouton2017belief}. Distributed optimal control frameworks have also been established for CAVs to minimize the fuel consumption and meanwhile maximize the traffic capacity,~\eg,~\cite{malikopoulos2018decentralized}.
%spanning tree~\cite{li2006cooperative}
%obtaining a high-efficiency conflict-free arrival plan is a priority in CAV scheduling. 
%The routes of incoming CAVs have complicated conflict relationships at intersections. 

{Intersections are the converging point of traffic flows in different directions. The intersection geometry structures regulate the routes of incoming CAVs, which leads to complicated conflict relationships at intersections. Therefore, CAVs' arrival time needs to be staggered to avoid collisions. What's more, a high-efficiency conflict-free arrival plan is required to improve traffic mobility.} Researchers have also pointed out that the passing order of the vehicles is the key factor influencing traffic mobility at intersections~\cite{li2006cooperative, meng2017analysis}. As incoming CAVs approach the intersection from different directions, the optimal passing order changes constantly and the solution space increases exponentially, preventing the determination of the optimal passing order using the brute-force method. This problem is typically solved using a reservation-based method~\cite{dresner2004multiagent}. The most straightforward reservation-based solution is the first-in-first-out (FIFO) strategy, in which vehicles that enter the intersection first are scheduled to leave it first~\cite{xu2017v2i, malikopoulos2018decentralized, dresner2008multiagent}. However, the FIFO passing order is not likely to be the optimal solution in most cases. Batch-based strategy is an improved version of the FIFO strategy, and it processes the vehicles in batches according to their direction to reduce the traffic delay~\cite{tachet2016revisiting}; however, its performance has not been fully optimized. Another widely used strategy is the optimization-based method. After formulating the scheduling problem as an optimization problem, various methods have been proposed to find the optimal passing order,~\eg, mixed-integer programming~\cite{mirheli2019consensus}, Monte Carlo tree search~\cite{xu2019cooperative}, and dynamic programming~\cite{yan2009autonomous}. However, the solution space exponentially increases with the number of the vehicles, making the use of optimization-based methods unfeasible.

%some researchers have also noticed that finding the optimal passing order of multiple CAVs is a discrete problem rather than a continuous problem,
%and thus graph theory is considered a promising methodology to describe and solve this problem. 
%Studies have modeled the problem using graph theory by employing the conflict graph analysis to optimize the traffic-signal phase plan~\cite{pandit2013adaptive, dey2013fuzzy}. 
%However, the scheduling of multiple CAVs is more difficult than scheduling a limited number of traffic signal phases because the number of incoming CAVs is much larger than the number of traffic signal phases. Some methods have been proposed to address this difficulty. 

{
Apart from these specific methods, graph theory is also introduced to find the optimal passing order of multiple CAVs, which is a discrete problem. For example,~\cite{pandit2013adaptive, dey2013fuzzy} employs conflict graph analysis to optimize the traffic-signal phase plan. In terms of multiple CAVs scheduling, several graph-based methods have also been used. Apart from the depth-first spanning tree (DFST) algorithm proposed in~\cite{xu2018distributed}, Petri net~\cite{lin2019graph} and conflict duration graph~\cite{deng2020conflict} are also used in modeling the scheduling problem. However, the discussion on the optimality of the scheduling problem of multiple CAVs is inadequate.
}

To tackle the problem of optimality, mathematical modeling,~\ie, the computational reduction of the intersection scheduling problem, is an essential research aspect. Some researchers have focused on reducing this problem into typical algorithmic problems, including the job-shop scheduling problem~\cite{ahn2017safety}, abstraction-based verification problem~\cite{ahn2019abstraction}, and polling system problem~\cite{miculescu2019polling}. However, most of these methods mainly focus on the feasibility of a conflict-free passing order solution. In practice, traffic efficiency is another critical topic that has not been fully considered in the computational reduction of this problem.

{
In this study, we mainly focused on finding the optimal CAV passing order at unsignalized intersections,~\ie, the CAV scheduling problem. Most of the existing studies have focused on obtaining the passing order through specific methods without considering the optimality and computational complexity~\cite{xu2018distributed, li2006cooperative, meng2017analysis, dresner2008multiagent, xu2019cooperative}. Some have focused on the feasibility of a conflict-free solution but neglected traffic efficiency optimization~\cite{tachet2016revisiting, lin2019graph, deng2020conflict, ahn2017safety, ahn2019abstraction, miculescu2019polling}. To address this problem, we introduce a graph-based conflict-free cooperation method to model the conflict relationship of the vehicles. Based on the conflict analysis, two novel methods are proposed to obtain the optimal vehicle passing order. Improved DFST (iDFST) obtains the local optimal solution while minimum clique cover (MCC) targets the global optimal one. The contributions of our study are as follows:
%Based on this method, we propose two novel methods to obtain the local or global optimal vehicle passing order. 

\begin{enumerate}
	\item An improved DFST method is proposed based on~\cite{xu2018distributed} to improve traffic efficiency. Instead of considering all conflict types as one united conflict set, we categorize different conflict types to further reduce the overall depth of the spanning tree,~\ie, the evacuation time of vehicles. The computational complexity of the improved algorithm remains low, and the passing order solution generated by it is proved to be the local optimum.
	
	\item Based on the concise and rigid graphical analysis, we first reduce the CAV scheduling problem to the minimum clique cover problem. Unlike most of the existing research~\cite{malikopoulos2018decentralized, xu2018distributed, dresner2008multiagent}, our reduction of the problem is proposed without the assumption of the FIFO principle. Hence, the global optimal passing order solution is obtained by solving the MCC problem. Moreover, we propose a heuristic method to solve the problem of numerous vehicles with a low computation burden.
	
	\item A distributed feedback control method is designed to apply the scheduling results of the proposed algorithms. In particular, the predecessor--leader following topology is used as the communication topology to lower the communication bandwidth requirement. Traffic simulations are conducted for various numbers of vehicle inputs and traffic volumes. The simulation results reveal that the proposed algorithms significantly improve both traffic efficiency and fuel economy with acceptable computational time.
\end{enumerate}
}

The remainder of this paper is organized as follows. Section~\ref{Sec:Scenario} introduces the scenario addressed in this study. Section~\ref{Sec:Methodology} presents the conflict analysis, iDFST method, MCC method, and distributed control. The simulation results are presented in Section~\ref{Sec:Simulation}, and Section~\ref{Sec:Conclusion} concludes the paper.

\section{Problem Statement}
	%\subsection{Scenario Setup}
	\label{Sec:Scenario}
	{Figure}~\ref{fig:Traffic_Scenario_0} depicts one of the most congested intersections near the Tsinghua campus. In this scenario, because the number of approaching lanes is higher than the number of departure lanes, traffic congestion frequently occurs during rush hour. In this study, we simplified the traffic scenario by separating the vehicles into different lanes based on their destination. A hierarchical framework was established to realize the cooperation of the CAVs~\cite{xu2017v2i}. A \emph{central cloud coordinator} was deployed at the intersection to guide the CAVs in driving through the intersection. After collecting the positions and velocities of all the CAVs, the upper coordinator schedules the passing order of the approaching CAVs, while the lower distributed vehicle controller controls the vehicle according to the determined passing order. The scheduling method is presented in Sections~\ref{Sec:ConflictAnalysis},~\ref{Sec:iDFST},~\ref{Sec:MCC}, and the distributed control is presented in Section~\ref{Sec:DistributedControl}. First, the following assumptions were devised to facilitate the design of the strategy.
	
	{
	\begin{assumption}
		\label{asp:LaneChange}
		Similar to most of the existing studies on intersection traffic~\cite{malikopoulos2018decentralized, xu2018distributed, xu2019cooperative}, lane change behavior was prohibited to guarantee vehicle safety and improve traffic efficiency. For safety reasons, vehicles with conflict relationships are not allowed to drive in the conflict zone of the intersection simultaneously.
	\end{assumption}
	}
	
	\begin{assumption}
		An ideal communication condition without communication delay or packet loss is under consideration. CAVs transmit their velocity and position to the central cloud coordinator through wireless communication, ~\eg, V2I communication~\cite{gerla2014internet}. 
	\end{assumption}
	
	\begin{assumption}
		The CAVs are capable of fully autonomous driving and are assumed to have perfect steering performance. Therefore, we only focused on the longitudinal control of the vehicle.
	\end{assumption}

	\begin{figure*}[!t]
		\centering
		\subfigure[Intersection Scenario\label{fig:Traffic_Scenario_0}]
		{\includegraphics[width=0.45\linewidth]{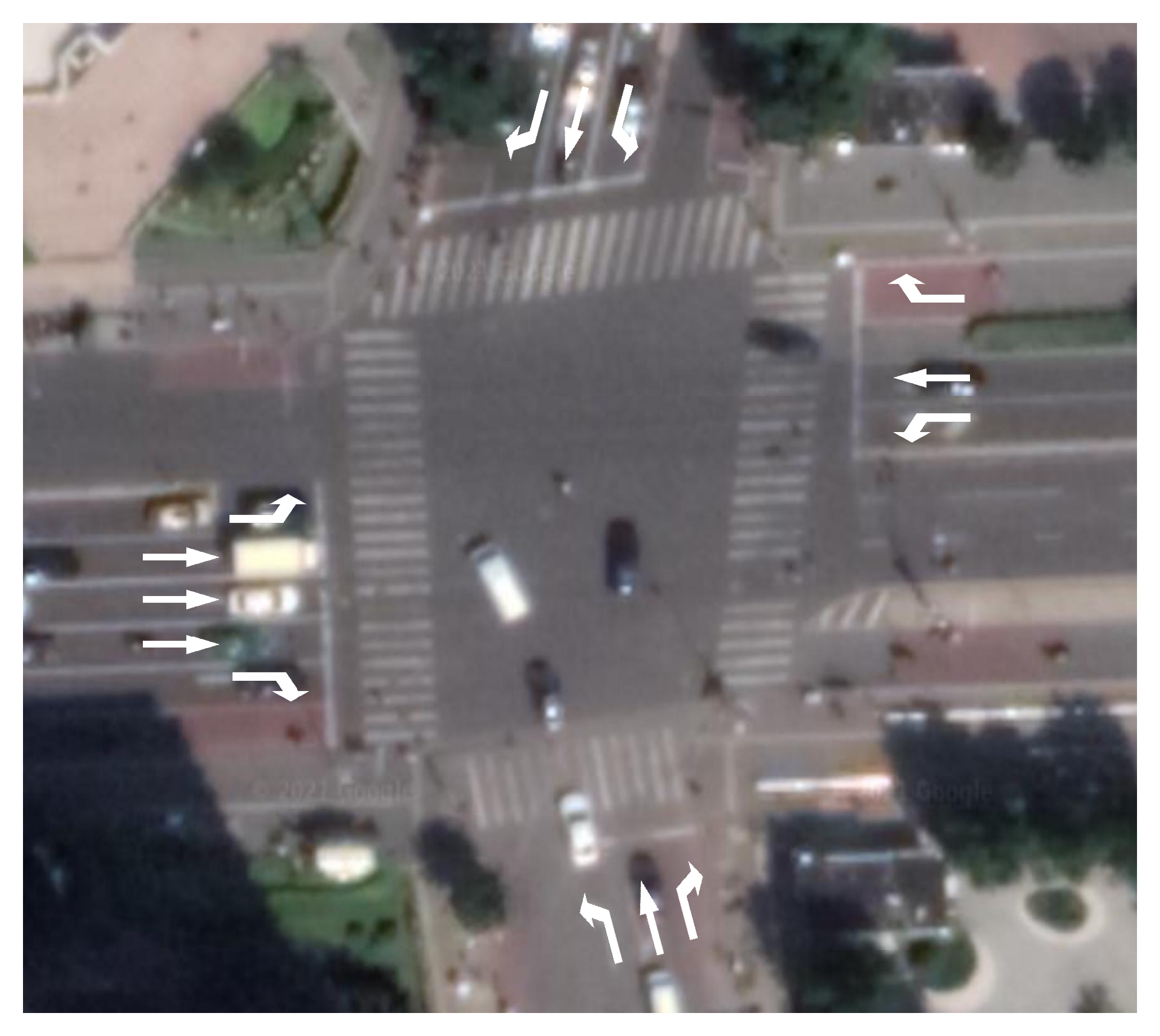}}
		\subfigure[Incoming Vehicles and Conflict Points\label{fig:Traffic_Scenario}]
		{\includegraphics[width=0.4\linewidth]{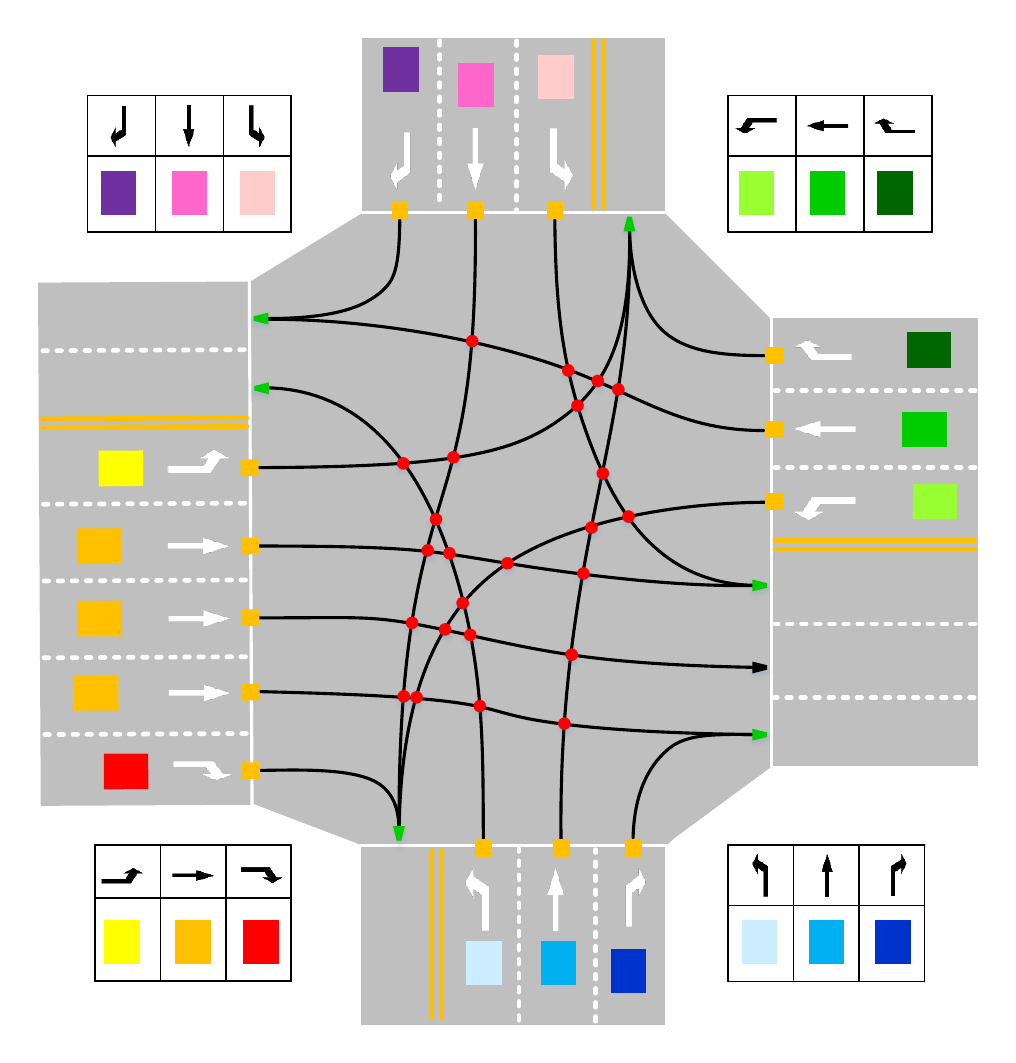}}
		\caption{One of the most congested intersections near the Tsinghua campus in Beijing is depicted in~\ref{fig:Traffic_Scenario_0} and the conflict relationship of the traffic scenario is illustrated in ~\ref{fig:Traffic_Scenario}. Vehicles coming from different lanes are distinguished by different colors. The red circles, orange squares, and green arrows represent different potential collision points, which are interpreted in Section~\ref{Sec:ConflictAnalysis}.}
		\label{fig:Intersection}
	\end{figure*}
	
	%	we define its state as $ x_{i}(t) = \left[p_{i}(t);v_{i}(t)\right], x_{i}(t) \in \mathbb{R} $
	{
	The incoming CAVs are indexed from $ 1 $ to $ N $ according to their arrival sequence at the control zone of length $ L_\mathrm{ctrl} $ in Fig.~\ref{fig:AllConflicts},~\ie, only the CAVs in the control zone are under coordination. Note that since we focus on multi-vehicle scheduling problem rather than vehicle control problem, second-order vehicle model as defined in~\eqref{equ:StateSpaceEquation} is used instead of more complex vehicle dynamic model. For each CAV $ i \left(i \leq N, i \in \mathbb{N}^+\right) $, the CAV dynamic model is described as follows:
	}
	\begin{equation}
		\label{equ:StateSpaceEquation}
		\begin{array}{l}
			\dot{\boldsymbol{x}}_{i}(t)=\boldsymbol{A} \boldsymbol{x}_{i}(t)+\boldsymbol{B} u_{i}(t), \\
			\boldsymbol{x}_{i}(t)=\left[\begin{array}{c}
				-p_{i} \\
				v_{i} \\
			\end{array}\right], \quad \boldsymbol{A}=\left[\begin{array}{cc}
				0 & 1 \\
				0 & 0 \\
			\end{array}\right], \quad \boldsymbol{B}=\left[\begin{array}{c}
				0 \\
				1 \\
			\end{array}\right],
		\end{array}
	\end{equation}
	where $ \boldsymbol{x}_{i} $ is the state space of CAV $ i $, and $ {p}_{i}$, ${v}_{i}$, and $u_{i}(t) $ denote the remaining distance to the stopping line, velocity, and control input of CAV $ i $ at time $ t $. We consider homogeneous vehicle platoons, and thus a continuous-time linear time-invariant system, $ (\boldsymbol{A}, \boldsymbol{B}) $, is used.
	
	{
	There also exist velocity and acceleration constraints of the vehicle.
	\begin{equation}
		\label{equ:Constraints}
		\begin{aligned}
			0 &\le v_{i} \le v_{\max}, \\
			u_{\min} &\le u_{i} \le u_{\max},
		\end{aligned}
	\end{equation}
	where $ v_{\max} $ is the maximum velocity limit; $ u_{\min} $ and $ u_{\max} $ are the deceleration and acceleration limits, respectively.
	}
	%	Formally, the scheduling result is specified as each CAV's planned arriving time at the stopping line, denoting as $ t_{i} $.
	%where $ t_{i} $ stands for the CAV's arriving time at the stopping line.
	
	As mentioned earlier, we aim to propose a cooperation method to improve both traffic safety and efficiency,~\ie, obtain the collision-free optimal CAV passing order. Assign $ t_{i}^{\mathrm{in}} $ as the time step when vehicle $ i $ enters the control zone, and $ t_{i}^{\mathrm{out}} $ as the time step when it arrives at the intersection. {Several performance indexes have been proposed to measure the scheduling performance. We use two performance indexes to represent overall traffic efficiency and average traffic efficiency respectively. The first one is overall traffic efficiency, namely evacuation time as shown in Definition~\ref{def:EvacuationTime}.}
	\begin{definition}[Evacuation Time]
		\label{def:EvacuationTime}
		The evacuation time of $ N $ CAVs is defined as the time when the last CAV reaches the stopping line; it is expressed as
		\begin{equation}
			\label{equ:EvacuationTime}
			t_\mathrm{evc} = \max t_{i}^{\mathrm{out}}, i \le N, i \in \mathbb{N}^+.
		\end{equation}
		Considering $ N $ incoming CAVs, $ t_\mathrm{evc} $ represents the arrival time of the last CAV at the stopping line. For $ N $ CAVs, smaller evacuation time means these CAVs pass through the intersection in shorter time. Thus, it shows the overall traffic efficiency performance,~\ie,the overall benefits of the CAVs. 
	\end{definition}
	
	{
	In addition, the vehicle travels through the control zone in $ t_{i}^{\mathrm{out}} - t_{i}^{\mathrm{in}} $ time, whereas it travels through it under the free driving condition in $ {L_{\mathrm{ctrl}}}/{v_{\max}} $ time. Accordingly, the average travel time delay (ATTD) is also chosen as performance index to measure the average traffic efficiency of the vehicles, as described in Definition~\ref{def:ATTD}.}
	\begin{definition}[Average Travel Time Delay]
		\label{def:ATTD}
		The ATTD is designed to evaluate the average traffic efficiency of $ N $ CAVs. It is expressed as
		\begin{equation}
			\label{equ:TravelTimeDelay}
			t_\mathrm{ATTD} = \frac{1}{N} \sum_{i = 1}^{N} \left(t_{i}^{\mathrm{out}} - t_{i}^{\mathrm{in}}-\frac{L_{\mathrm{ctrl}}}{v_{\max}}\right),
		\end{equation}
		where $ L_{\mathrm{ctrl}} $ is the length of the control zone and $ t_\mathrm{ATTD} $ represents the average travel delay of the CAVs. Since the travel time of every CAV is considered in $ t_\mathrm{ATTD} $, it denotes the individual benefits of $ N $ CAVs, which is the secondary optimization target of traffic efficiency.
	\end{definition}
	
	In this study, we mainly focused on finding the optimal CAV passing order to minimize the evacuation time, as defined in Definition~\ref{def:EvacuationTime}. In addition, the passing order should also satisfy the conflict-free relationship between the CAVs. Section~\ref{Sec:Methodology} presents the specific methods used to guarantee safety and improve traffic efficiency. Moreover, the ATTD defined in Definition~\ref{def:ATTD} is introduced to evaluate the algorithm performance. The simulation results and algorithm performance are presented in Section~\ref{Sec:Simulation}.

\section{Methodology}
\label{Sec:Methodology}
In the following paragraph, Section~\ref{Sec:ConflictAnalysis} first introduces the graph-based intersection conflict analysis method. Then, Section~\ref{Sec:iDFST} presents the optimization of the DFST method to obtain the local optimal passing order solution, termed as the iDFST method. In Section~\ref{Sec:MCC}, we introduce the MCC method to find the global optimal passing order solution. Finally, the distributed controller design is presented in Section~\ref{Sec:DistributedControl} to undertake the scheduling plan.
%	The distributed platoon control method is interpreted in Section~\ref{Sec:DistributedControl}.
% In this way, the CAVs from different lanes can drive through the intersection as if they were in the same lane. 

\subsection{Conflict Analysis}
\label{Sec:ConflictAnalysis}
{
Conflict analysis is one of the most fundamental area in multi-agent cooperation research~\cite{roess2004traffic, li2021design, li2022cooperative}. In~\cite{xu2018distributed}, the DFST method using a \emph{virtual platoon} was proposed, which projects the CAVs from different lanes onto a virtual lane to avoid collision. DFST scheduling method is built on vehicle conflict relationship analysis.
}
%The most evident advantage is that the idling of the vehicles at the stopping line is avoided to improve the traffic efficiency. A DFST method is then introduced to find the CAV passing order.

In Fig.~\ref{fig:Intersection}, the CAVs from different lanes form multiple conflict points. Despite the complicated conflict scenarios, they can be classified into the following conflict modes. Without loss of generality, several CAVs are selected in Fig.~\ref{fig:AllConflicts} to illustrate the conflict relationship. First, we introduce the route conflicts, where CAVs have trajectory intersections on their paths.

\begin{figure}[!t]
	\centering
	{\includegraphics[width=\linewidth]{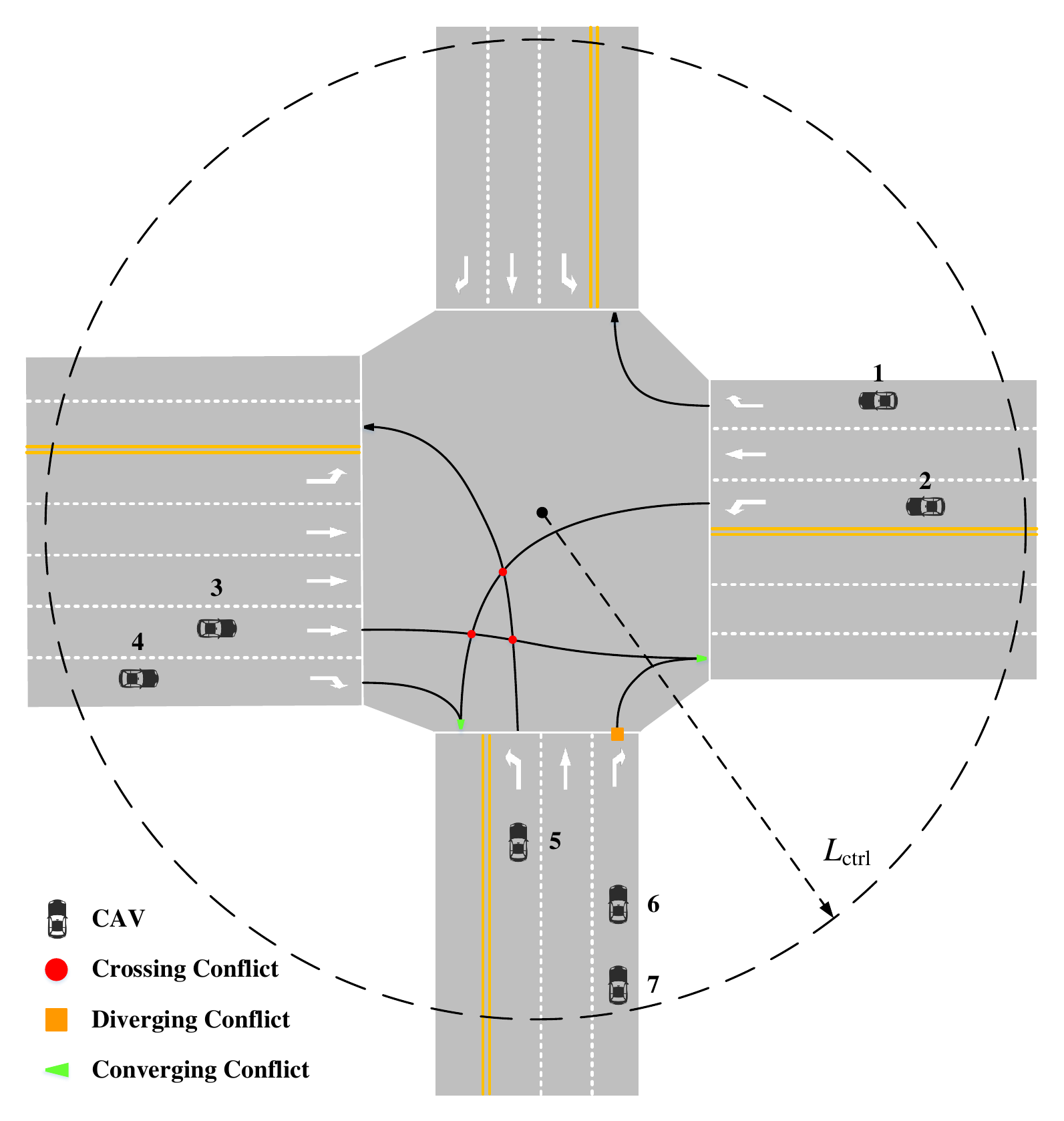}}
	\caption{Illustration of different conflict relationship between the seven CAVs. Crossing conflicts are plotted as red circles, diverging conflicts as yellow squares, and converging conflicts as green arrows.}
	\label{fig:AllConflicts}
\end{figure}

%	CAV $ 2 $ has crossing conflict (red circle) with CAV $ 3 $, and it also has diverging conflict (yellow square) with CAV $ 3 $. CAV $ 2 $ and CAV $ 3 $ have converging conflict (green arrow). CAV $ 1 $ and CAV $ 5 $ have reachability conflict because of maximum velocity and acceleration limitation.

%	\begin{figure}[!t]
	%		\centering
	%		\subfigure[Crossing Conflict\label{fig:ConflictCrossing}]
	%		{\includegraphics[width=0.2\linewidth]{}}
	%		\subfigure[Diverging Conflict\label{fig:ConflictDiverging}]
	%		{\includegraphics[width=0.2\linewidth]{}}
	%		\subfigure[Converging Conflict\label{fig:ConflictConverging}]
	%		{\includegraphics[width=0.2\linewidth]{}}
	%		\subfigure[No Conflict\label{fig:NoConflict}]
	%		{\includegraphics[width=0.2\linewidth]{}}
	%		\caption{Different conflict modes.}
	%		\label{fig:ConflictRelationship}
	%	\end{figure}

\begin{enumerate}[1)]
	\item \emph{Crossing Conflict}: Vehicles from different lanes have the potential to collide while crossing the conflict points, indicated by the 24 red circles. For example, CAV $ 2 $ and CAV $ 3 $ have a crossing conflict point.
	\item \emph{Diverging Conflict}: Lane changing and overtaking are not permitted, as explained in Assumption~\ref{asp:LaneChange}. Thus, vehicles on the same lane cannot pass the intersection simultaneously, as indicated by the 14 orange squares. For example, CAV $ 6 $ and CAV $ 7 $ have a diverging conflict point.
	\item \emph{Converging Conflict}: {Vehicles from different lanes cannot enter the same lane simultaneously, as shown by the 6 green arrows.} For example, CAV $ 2 $ and CAV $ 4 $ have a converging conflict point.
	%		\item \emph{No Conflict}: Vehicles that have no trajectory intersections can drive through the intersection at the same time, as shown in Fig~\ref{fig:NoConflict}.
\end{enumerate}

%	If the two CAVs have no route conflict, they may still not pass the intersection simultaneously because of the acceleration constraints.	
{
Apart from the above-mentioned three kinds of conflicts that describe the route conflict of the CAVs, we claim that there exists another conflict type caused by the speed and acceleration limitation of the CAVs. For instance in Fig.~\ref{fig:AllConflicts}, CAV $ 7 $ arrives at the control zone when CAV $ 5 $ is about to reach the stopping line with the designed virtual platoon velocity. If we schedule CAV $ 5 $ and $ 7 $ to pass simultaneously, CAV $ 5 $ has to wait for CAV $ 7 $ near the stopping line for a long time, which jeopardizes traffic safety and efficiency. In this case, CAV $ 7 $ is not supposed to catch up with CAV $ 5 $ at the stopping line even if they have no conflict relationship, because of the limitations on the speed and acceleration of CAV $ 7 $. Hence, we further introduce the fourth type of CAV conflict, namely, the \emph{reachability conflict}.

%Even if the CAVs have no route conflict, they cannot pass the intersection simultaneously because of the acceleration and velocity constraints. For example, CAV $ 5 $ and CAV $ 7 $ have a reachability conflict.
%are possibly not able to pass the intersection simultaneously because of the acceleration and velocity constraints.

\begin{enumerate}[resume*]
	\item \emph{Reachability Conflict}: Even if the CAVs have no route conflict, their acceleration and velocity constraints still limit them from passing the intersection simultaneously. For example, as described before, CAV $ 5 $ and CAV $ 7 $ have a reachability conflict.
\end{enumerate}

Assume CAV $ 7 $ reaches the control zone at distance of $ L_\mathrm{ctrl} $ with initial speed of designed virtual platoon velocity $ v_{\mathrm{p}} $. From equation~\eqref{equ:StateSpaceEquation} and~\eqref{equ:Constraints}, if CAV $ 7 $ immediately accelerates with maximum acceleration $ u_{\max} $ until it reaches maximum speed $ v_{\max} $, it still takes $ t_{\min} $ time until it reaches the stopping line as shown in~\eqref{equ:TimeMin}.
\begin{equation}
	\label{equ:TimeMin}
	t_{min} = \frac{v_{\max} - v_\mathrm{p}}{u_{\max}} + \frac{1}{v_{\max}}\left(L_\mathrm{ctrl} - \frac{v_{\max}^{2} - v_\mathrm{p}^{2}}{2u_{\max}}\right).
\end{equation}

We assume CAV $ 5 $ is also running with designed virtual platoon velocity $ v_\mathrm{p} $  within the control zone, and we expect CAV $ 5 $ to maintain its speed in order to obtain a steady traffic state near the stopping line. If CAV $ 5 $ in the control zone is so close to the stopping line that it arrives at the stopping line in less than $ t_{\min} $ time, CAV $ 7 $ is impossible to catch up with CAV $ 5 $. By simplifying~\eqref{equ:TimeMin}, we obtain the judging condition as
\begin{equation}
	\label{equ:Reachability}
	\frac{L_\mathrm{prec}}{v_\mathrm{p}} < \frac{L_\mathrm{ctrl}}{v_{\max}} + \frac{(v_{\max}-v_\mathrm{p})^2}{2u_{\max}v_{\max}},
\end{equation}
where $ L_\mathrm{prec} $ is the distance of the preceding CAV from the stopping line. In other words, preceding vehicles with steady $ v_\mathrm{p} $ form a steady scheduling plan near the intersection. If the new CAV cannot catch up with some of them because of velocity/acceleration constraints, reachability conflict is used to preserve the consistency of scheduling results. Note that most of the existing researches assume that the CAVs can reach the stopping line under all circumstances,~\ie, the reachability conflict is ignored.
}

%the CAVs that are very close to the intersection should not be considered in the scheduling of new incoming CAVs. 
%	Formally, we name this conflict as \emph{reachability conflict}.
%Note that because the conflict sets are determined when the CAV reaches the control zone, the CAV indexes in the conflict sets are smaller than those of the ones on the control zone border,~\ie, the elements in the conflict sets satisfy~\eqref{equ:ElementRelationship}.

{
We define different conflict sets to describe the conflict relationship of the CAVs. For each CAV $ i \left(i \leq N, i \in \mathbb{N}^+\right) $, the crossing set is defined as $ \mathcal{C}_{i} $, diverging set as $ \mathcal{D}_{i} $, converging set as $ \mathcal{V}_{i} $, and reachability set as $ \mathcal{R}_{i} $. Note that conflict sets are determined when the CAV reaches the control zone border. At this time, CAV $ i $ is at the control zone border while other vehicles are in the control zone. Hence, other CAVs' indexes $ j $ in CAV $ i $'s conflict sets are smaller than $ i $ itself,~\ie, the elements in CAV $ i $'s conflict sets satisfy~\eqref{equ:ElementRelationship}.
\begin{equation}
	\label{equ:ElementRelationship}
	j < i, \, \mathrm{if}\ j \in \mathcal{C}_{i} \cup \mathcal{D}_{i} \cup \mathcal{V}_{i} \cup \mathcal{R}_{i}.
\end{equation}
}

\begin{table}[!t]
	% increase table row spacing, adjust to taste
	\renewcommand{\arraystretch}{1.3}
	% if using array.sty, it might be a good idea to tweak the value of
	\centering
	\caption{Conflict Analysis of Example~\ref{exp:1}.}
	\label{tab:ConflictAnalysis}
	\begin{tabular}{c|c|c|c|c|c|c|c}
		\hline
		$ \mathbf{i} $ & \bfseries 1 & \bfseries 2 & \bfseries 3 & \bfseries 4 & \bfseries 5 & \bfseries 6 & \bfseries 7  \\
		\hline
		$ \mathcal{C}_{i} $ & $ \varnothing $ & $ \varnothing $ & $ \{2\} $ & $ \varnothing $ & $ \{2,3\} $ & $ \varnothing $ & $ \varnothing $ \\
		
		$ \mathcal{D}_{i} $ & $ \{0\} $ & $ \{0\} $ & $ \{0\} $ & $ \{0\} $ & $ \{0\} $ & $ \{0\} $ & $ \{6\} $ \\
		
		$ \mathcal{V}_{i} $ & $ \varnothing $ & $ \varnothing $ & $ \varnothing $ & $ \{2\} $ & $ \varnothing $ & $ \{3\} $ & $ \{3\}  $ \\
		
		$ \mathcal{R}_{i} $ & $ \varnothing $ & $ \varnothing $  & $ \varnothing $ & $ \varnothing $ & $ \varnothing $ & $ \varnothing $ & $ \{1,5\} $ \\
		\hline
	\end{tabular}
\end{table}

\begin{example}[Conflict analysis for $ 7 $ CAVs]
	\label{exp:1}
	Consider the example shown in Fig.~\ref{fig:AllConflicts}. CAVs $ 1 $ and $ 2 $ are approaching from the east direction and have two separate destinations. Two CAVs are traveling straight from the west with $ 3 $ on one lane and $ 4 $ on another. CAV $ 5 $ is approaching from the south on the left lane and CAV $ 6,7 $ is on the right lane. Note that CAV $ 7 $ is too far from CAVs $ 1 $ and $ 5 $ to catch up with them at the stopping line.
\end{example}

%	\begin{figure}[!t]
	%		\centering
	%		{\includegraphics[width=0.6\linewidth]{}}
	%		\caption{Scenario Case in Example~\ref{exp:1}.}
	%		\label{fig:Scenario_Example_1}
	%	\end{figure} 

%Note that in the virtual platoon, there exists a virtual leading vehicle $ 0 $ traveling with a constant velocity $ v_\mathrm{p} $ in front of the CAVs that are closest to the intersection. Thus, the leading CAVs in the real world maintain a typical platooning behavior. Accordingly, CAV $ 0 $ is added to the diverging set of the CAV closest to the intersection on each lane; therefore, in Example~\ref{exp:1}, $ \mathcal{D}_{k} = \{0\}, k = \{1,2,3,4,5,6\} $. The remaining conflict sets are shown in Table~\ref{tab:ConflictAnalysis}.

%Similar to~\cite{xu2018distributed}, we expect to obtain a conflict-free spanning tree,~\ie, a virtual platoon from conflict analysis. In this spanning tree, each node represents a vehicle and nodes in the same layer are conflict-free. 
%In Section~\ref{Sec:iDFST}, we will interpret that CAVs in virtual leading vehicle may not follow its nearest preceding vehicle but the preceding vehicle in the spanning tree.

{
Note that a virtual leading vehicle $ 0 $ is set with a constant velocity $ v_\mathrm{p} $ in front of the CAVs that are closest to the intersection. $ v_\mathrm{p} $ represents pre-defined desired platoon speed in virtual platoon coordination. Without the virtual leading vehicle $ 0 $, CAVs which are closest to the intersection will accelerate to $ v_{\max} $ and make it impossible to form a virtual platoon. With this pre-defined $ v_\mathrm{p} $ and the spanning tree generated in Section~\ref{Sec:iDFST}, the virtual platoon is controlled the same way as typical vehicle platooning. Accordingly, virtual leading vehicle $ 0 $ is added to the diverging set of the CAV closest to the intersection on each lane; therefore, in Example~\ref{exp:1}, $ \mathcal{D}_{k} = \{0\}, k = \{1,2,3,4,5,6\} $. The remaining conflict sets are shown in Table~\ref{tab:ConflictAnalysis}.
}
%	\begin{tabular}{c|p{0.39cm}<{\centering}|p{0.39cm}<{\centering}|p{0.39cm}<{\centering}|p{0.66cm}<{\centering}|p{0.66cm}<{\centering}|p{0.66cm}<{\centering}|p{0.66cm}<{\centering}|p{0.66cm}<{\centering}}

	\subsection{Improved DFST Method}
	\label{Sec:iDFST}
	
	\begin{figure}[!t]
		\centering
		\subfigure[Conflict Directed Graph\label{fig:ConflictDirectedGraph}]
		{\includegraphics[width=0.46\linewidth]{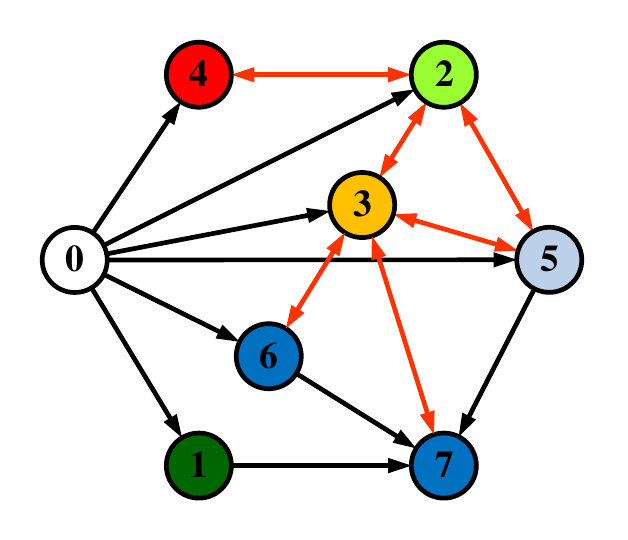}}
		\subfigure[Coexisting Undirected Graph\label{fig:CoexistUndirectedGraph}]
		{\includegraphics[width=0.46\linewidth]{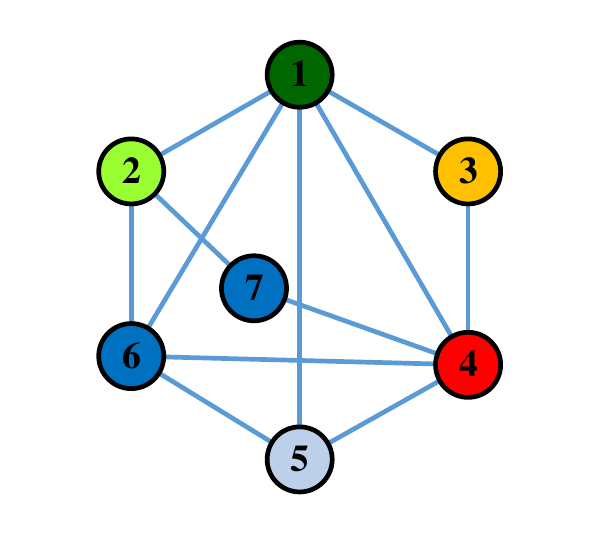}}
		\caption{Fig.~\ref{fig:ConflictDirectedGraph} is the conflict directed graph (CDG). The black unidirectional edges represent the diverging conflicts and reachability conflicts, whereas the red bidirectional edges represent the crossing conflicts and converging conflicts. Fig~\ref{fig:CoexistUndirectedGraph} is the Coexisting Undirected Graph (CUG), which is the complement graph of the CDG (exclude node $ 0 $), and describes the coexistence relationship of the vehicles.}
		\label{fig:ConflictDirectedScenario}
	\end{figure}
	
	Based on the conflict set analysis in Section~\ref{Sec:ConflictAnalysis}, we further define the conflict directed graph (CDG) $ \mathcal{G}_{N+1} $ to represent the conflict relationship between the CAVs.
	
	\begin{definition}[Conflict Directed Graph]
		\label{def:CDG}
		The CDG is denoted as $ \mathcal{G}_{N+1} = \left( \mathcal{V}_{N+1},\mathcal{E}_{N+1} \right) $. If there are $ N $ CAVs in the control zone, we have the node set $ \mathcal{V}_{N+1} = \{0,1,2,\dots,N\} $ (node $0$ denotes the virtual leading vehicle). The unidirectional edge set is defined as $ \mathcal{E}_{N+1}^{u}=\{(i,j) \mid i \in \mathcal{D}_{j} \cup \mathcal{R}_{j}\} $, and bidirectional edge set is defined as $ \mathcal{E}_{N+1}^{b}=\{(i,j) \mid i \in \mathcal{C}_{j} \cup \mathcal{V}_{j}\} $. The edge set is the union of these two sets as $ \mathcal{E}_{N+1} = \mathcal{E}_{N+1}^{u} \cup \mathcal{E}_{N+1}^{b} $.
	\end{definition}
	
	The CDG of the case scenario in Fig.~\ref{fig:AllConflicts} is drawn in Fig.~\ref{fig:ConflictDirectedGraph}. {The nodes in the CDG represent the $ N+1 $ CAVs in the control zone, while the edges represent their conflicts.} The black unidirectional edges represent the diverging conflicts and reachability conflicts. The existence of a unidirectional edge $ (i,j) $ implies that CAV $ j $ is not allowed to overtake CAV $ i $ because of Assumption~\ref{asp:LaneChange} or CAV $ j $ is not able to catch up to CAV $ i $ because it satisfies~\eqref{equ:Reachability}. Thus, CAV $ j $ cannot reach the intersection earlier than CAV $ i $. The red bidirectional edges denote the crossing conflict and converging conflict, which means that the arrival sequence of the CAVs $ i $ and $ j $'s can be exchanged. Note that in the DFST method, the crossing conflict $ \mathcal{C}_i $, diverging conflict $ \mathcal{D}_i $, and converging conflict $ \mathcal{V}_i $ are treated as one union unidirectional-edge conflict. {In Algorithm~\ref{algo:DFST:2}, we will interpret the separation of these conflict sets to improve the performance of the original DFST algorithm.}
	
	It is straightforward that the CDG describes all the conflict relationships of the CAVs. Previous researches have proved that the CDG in the DFST method possesses a DFST, as shown in Lemma~\ref{lemma:CDG}. Because this conclusion is drawn with only unidirectional edges in the CDG, it can be applied to our iDFST method.
	
	\begin{lemma}[\cite{xu2018distributed}]
		\label{lemma:CDG}
		A CDG has a DFST with root node $ 0 $,~\ie, the virtual leading vehicle.
	\end{lemma}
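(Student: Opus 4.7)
The plan is to construct an explicit spanning tree using only the diverging-conflict edges of the CDG and then argue that a depth-first search from node $0$ recovers it. The key observation is that the diverging conflicts alone already provide enough structure to link every CAV back to the virtual leading vehicle, so the extra edge types only contribute non-tree edges.

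First, I would invoke the convention stated just before Table~\ref{tab:ConflictAnalysis}: node $0$ is appended to the diverging set $\mathcal{D}_k$ of every CAV that is currently the closest to the stopping line on its own lane. This places a unidirectional edge in $\mathcal{E}_{N+1}^{u}$ from $0$ to the leading CAV on each incoming lane. Next, for any non-leading CAV $j$, let $i$ be the CAV immediately ahead of $j$ on the same lane; since lane changes and overtaking are forbidden by Assumption~\ref{asp:LaneChange}, $j$ cannot pass $i$, so $i \in \mathcal{D}_j$ and hence $(i,j) \in \mathcal{E}_{N+1}^{u}$. Chaining these edges along each lane yields a directed path from node $0$ through the lane's leading CAV down to its tail CAV. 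Taking the union of these per-lane chains together with the edges out of $0$ produces a subgraph $\mathcal{T} \subseteq \mathcal{G}_{N+1}$ that contains every vertex in $\mathcal{V}_{N+1}$, has exactly $N$ edges, and is connected — i.e., a directed tree rooted at $0$ that spans the CDG.

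Finally, I would verify that $\mathcal{T}$ is realizable as a DFST. Running DFS from $0$ and visiting the lanes one at a time — descending a lane's chain fully before backtracking — visits every vertex along tree edges of $\mathcal{T}$; the bidirectional edges from crossing and converging conflicts and the extra unidirectional reachability edges simply become non-tree edges (back, cross, or forward edges) in the DFS and do not disrupt the spanning property. The main obstacle here is more bookkeeping than combinatorial: one must make precise that under Assumption~\ref{asp:LaneChange} each CAV in the control zone belongs to a unique lane, and that the CAVs on any lane admit a well-defined preceding-order consistent with the indexing rule~\eqref{equ:ElementRelationship}, so that the chain $0 \to (\text{lead CAV}) \to \cdots \to (\text{tail CAV})$ is unambiguously defined on every lane.
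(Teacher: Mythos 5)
Your argument is essentially the right one, and it matches the only argument available: the paper itself does not prove Lemma~\ref{lemma:CDG} but imports it from \cite{xu2018distributed}, and the standard justification is exactly the reachability-via-diverging-edges construction you give. The convention that node $0$ is placed in $\mathcal{D}_k$ for each lane-leading CAV, plus the fact that every non-leading CAV has its immediate lane predecessor in its diverging set (Assumption~\ref{asp:LaneChange}), yields a directed path from $0$ to every vertex, which is all the lemma needs.

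One small overstatement in your last step: it is not true in general that a DFS from $0$ ``recovers'' your tree $\mathcal{T}$ with all other edges demoted to non-tree edges. DFS explores \emph{all} out-neighbors of the current vertex before backtracking, so a bidirectional crossing-conflict edge incident to a lane leader can perfectly well be traversed as a tree edge, attaching a CAV from another lane as a descendant of that leader rather than of its own lane predecessor; the resulting DFS tree then differs from $\mathcal{T}$. This does not damage the proof, because the lemma only asserts the \emph{existence} of a depth-first spanning tree rooted at $0$, and that follows immediately from the reachability you established: since every vertex of $\mathcal{V}_{N+1}$ is reachable from $0$ along edges of $\mathcal{E}_{N+1}^{u}$, any DFS started at $0$ visits every vertex and its tree spans the CDG. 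I would simply replace the claim ``DFS recovers $\mathcal{T}$'' with ``reachability from $0$ implies DFS from $0$ produces some spanning tree,'' and the proof is complete.
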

	
	%if the depth-first spanning tree is built according to CDG,
	Because the CDG describes all the conflict relationships of the CAVs, and the edges specifically describe the conflict type, a feasible passing order solution can be obtained by building a spanning tree from the CDG. Similar to the general graph theory, the depth of each node in the spanning tree is calculated based on its distance to the root node $ 0 $. Specifically, the depth of the nodes represents the passing order of the CAVs,~\ie, the CAVs at the same depth shall pass the intersection simultaneously. \cite{xu2018distributed} has proved that if the spanning tree is built by the DFST method, the CAVs of the same depth have a conflict-free attribute,~\ie, a feasible spanning tree, which has been proved in Lemma~\ref{lemma:ConflictFree}.
	
	\begin{lemma}[\cite{xu2018distributed}]
		\label{lemma:ConflictFree}
		Consider a virtual platoon characterized by the spanning tree $ \mathcal{G}_{N+1}' $, which is built from the CDG $ \mathcal{G}_{N+1} $. The trajectories of two CAVs at the same depth in $ \mathcal{G}_{N+1}' $ have no conflict relationship with each other.
	\end{lemma}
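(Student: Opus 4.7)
The plan is to proceed by contradiction. Suppose two CAVs $i$ and $j$ share the same depth $d$ in the spanning tree $\mathcal{G}_{N+1}'$ but are joined by some conflict edge of the CDG $\mathcal{G}_{N+1}$. Following the decomposition $\mathcal{E}_{N+1} = \mathcal{E}_{N+1}^{u} \cup \mathcal{E}_{N+1}^{b}$ from Definition~\ref{def:CDG}, I would split into two cases and derive a contradiction in each.

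In the first case the offending edge is unidirectional, say $(i,j) \in \mathcal{E}_{N+1}^{u}$, so that $i \in \mathcal{D}_{j} \cup \mathcal{R}_{j}$. A diverging relation forces $j$ to trail $i$ on the same lane, since Assumption~\ref{asp:LaneChange} forbids overtaking, while a reachability relation invokes condition~\eqref{equ:Reachability} to prevent $j$ from reaching the stopping line together with $i$. Either way, any admissible virtual-platoon schedule must place $j$ strictly after $i$, so the DFST construction assigns a depth to $j$ that is strictly larger than that of $i$, contradicting the assumption of equal depths.

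In the second case the edge is bidirectional, say $(i,j) \in \mathcal{E}_{N+1}^{b}$, representing a crossing or converging conflict. Here I would appeal to the depth-first construction itself: when the search first expands one endpoint and encounters its CDG-neighbor, the neighbor is either already embedded in the tree at a previously explored depth, or it is appended as a descendant of the current node and thereby acquires a strictly larger depth. Since neither outcome yields equal depth, we again reach a contradiction. Combining the two cases yields the claim.

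I expect the main obstacle to be reconciling the purely graph-theoretic notion of DFS tree depth with the scheduling-theoretic notion of intersection passing depth used throughout the paper, because a plain DFS on a mixed graph can in principle produce cross edges linking equal-depth vertices. The argument of~\cite{xu2018distributed} must therefore rely on a specific depth-propagation rule --- likely $\text{depth}(v) = \max\{\text{depth}(u) : (u,v) \in \mathcal{E}_{N+1}\} + 1$ at the moment $v$ is appended --- which I would formalize by induction on the order in which nodes are added to $\mathcal{G}_{N+1}'$, using Lemma~\ref{lemma:CDG} to guarantee that such a spanning tree rooted at node $0$ exists in the first place.
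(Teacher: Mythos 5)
First, a point of reference: the paper does not prove this lemma itself---it is imported wholesale from \cite{xu2018distributed}---so the only thing to check your argument against is the construction in Algorithms~\ref{algo:DFST:1} and~\ref{algo:DFST:2}, from which the property is meant to follow directly. Your Case 1 is fine: for a unidirectional edge the algorithm enforces $d_j = d_k+1 > l_{u} \ge d_i$ for every $i \in P_u$, so equal depths are impossible there.

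The gap is in Case 2. Your dichotomy---the CDG-neighbor is either ``already embedded in the tree at a previously explored depth'' or ``appended as a descendant''---does not close the argument, because a previously explored depth can perfectly well coincide with the depth later assigned to the current node; generic DFS reasoning cannot exclude this, as you yourself concede at the end. Your proposed repair, the propagation rule $\mathrm{depth}(v)=\max\{\mathrm{depth}(u) : (u,v)\in\mathcal{E}_{N+1}\}+1$, is the rule of the \emph{original} DFST of \cite{xu2018distributed}, where all conflict types are merged into a single unidirectional set and the lemma is then immediate; but it is not the rule of the iDFST to which the paper also applies this lemma. In Algorithm~\ref{algo:DFST:2} a bidirectional neighbor $n\in P_b$ only constrains the new node via $d_j \notin L_b$, i.e.\ $d_j \neq d_n$, and $d_j$ may well end up \emph{smaller} than $d_n$---so a max-based induction does not describe what the algorithm does. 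The argument that actually works for Case 2 is a direct appeal to Line~\ref{algo:DFST:2:1}: by~\eqref{equ:ElementRelationship} every conflict edge joins a larger-index node to an already-placed smaller-index node, and at insertion time the larger-index node's depth is explicitly chosen to avoid the set $L_b$ of depths of all its bidirectional-conflict predecessors (and to exceed $l_u$ for the unidirectional ones). A one-step induction on insertion order then covers both edge types at once, with no contradiction argument and no DFS-structural reasoning needed.
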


%	Hence, it is evident that for a group of CAVs in the control zone, a passing order solution is related to the DFST. It can be inferred that the depth of the spanning tree is related to the traffic mobility at the intersection. \emph{Evacuation time} represents the overall time cost for all the CAVs to pass the intersection,~\ie, the arrival time of the last CAV. It also represents the largest depth of the spanning tree nodes, denoted as $ d_\mathrm{all} $. Thus, minimizing the evacuation time is equivalent to finding the smallest possible $ d_\mathrm{all} $ in all the feasible spanning trees. Formally, we describe this problem as
	{
	Hence, it implies that for a group of CAVs in the control zone, a passing order solution is related to the DFST. It can be also inferred that the depth of the spanning tree is related to the traffic mobility at the intersection. Note that the spanning tree dynamically changes as CAV enters the control zone or arrives at the stopping line, which will be shown in the case study in Section~\ref{Sec:CaseStudy}. 
	
	In our research, CAVs are abstracted as nodes in the spanning tree. Therefore, the largest depth of the spanning tree nodes $ d_\mathrm{all} $ is related to evacuation time as defined in Definition~\ref{def:EvacuationTime}, and the average depth of the spanning tree nodes is related to ATTD as defined in Definition~\ref{def:ATTD}. Note that vehicles of the same depth in the spanning tree are assumed to pass the stopping line simultaneously. It also means that at any time, vehicles in the conflict zone are conflict-free to avoid potential collisions. Therefore, adequate passing time should be given to each layer to ensure that CAV in every trajectory is able to pass the conflict zone. We are aware that if CAVs of different trajectories are treated respectively, traffic efficiency can be further optimized. However, this deviates from our topic of optimality and computation complexity. Thus, the same passing time is assumed for each layer in the spanning tree.
	
	\emph{Evacuation time} represents the overall time cost for all the CAVs to pass the intersection,~\ie, the arrival time of the last CAV. Thus, minimizing the evacuation time is equivalent to finding the smallest possible $ d_\mathrm{all} $ in all the feasible spanning trees. Formally, we describe this problem as
	}

	{
	\begin{proposition}[SMALLEST-DEPTH]
		\label{equ:SmallestDepth}
		$ <\mathcal{G}_{N+1}, \mathcal{G}_{N+1}', k> $: $ \mathcal{G}_{N+1} $ has a feasible spanning tree $ \mathcal{G}_{N+1}' $ with $ d_\mathrm{all} = k $.
	\end{proposition}
	}

%	\begin{equation}
%		\label{equ:SmallestDepth}
%		\begin{aligned}
%			&\text{SMALLEST-DEPTH}= \{<\mathcal{G}_{N+1}, \mathcal{G}_{N+1}', k>: \mathcal{G}_{N+1} \\ 
%			&\text{ has a feasible spanning tree } \mathcal{G}_{N+1}' \text{ with } d_\mathrm{all} = k \}.
%		\end{aligned}
%	\end{equation}
	
	%	 { Maybe ATTD}
	%-SPANNING-TREE
	%of finding the smallest possible $ d_\mathrm{all} $ from the CDG $ \mathcal{G}_{N+1} $
	
	We propose the iDFST method in Algorithm~\ref{algo:DFST:1} and~\ref{algo:DFST:2} to generate a DFST. For each node $ i $, we first find all its parent nodes in $ \mathcal{G}_{N+1} $ in Lines~\ref{algo:DFST:1:Parent1} and~\ref{algo:DFST:1:Parent2} of Algorithm~\ref{algo:DFST:1}. Note that both the original DFST method and iDFST method are based on the FIFO principle. Even though there are bidirectional edges in the CDG, the passing order is decided according to the arrival sequence of the CAVs. Thus, the larger-index CAVs should not be considered in the passing-order optimization process of the smaller-index CAVs. This also implies that only the smaller-index CAVs are selected as the parent node candidates, corresponding to the conflict definition in~\eqref{equ:ElementRelationship}.

	\begin{algorithm}[tb]
		\caption{Improved Depth-first Spanning Tree Method}
		\label{algo:DFST:1}
		\begin{algorithmic}[1]
			\Require{Conflict Directed Graph $ \mathcal{G}_{N+1} = \left( \mathcal{V}_{N+1},\mathcal{E}_{N+1} \right) $} 
			\Ensure{Improved Depth-first Spanning Tree  $ \mathcal{G}_{N+1}' = \left( \mathcal{V}_{N+1},\mathcal{E}_{N+1}' \right) $}\\
			\textbf{initialize}: Set the depth of node $ 0 $'s layer $d_0=0$
			\For{$i = 1,2,...,N$}
			\State $ P_{u} = \{m \in \mathcal{V}_{N+1} \mid (m,i) \in \mathcal{E}_{N+1}^{u}\} $. Find all parent nodes $ m $ of $ i $ in $ \mathcal{V}_{N+1} $ with unidirectional edges $ (m,i) $
			\label{algo:DFST:1:Parent1}
			\State $ P_{b} = \{n \in \mathcal{V}_{N+1} \mid (n,i) \in \mathcal{E}_{N+1}^{b}\} $. Find all parent nodes $ n $ of $ i $ in $ \mathcal{V}_{N+1} $ with bidirectional edges $ (n,i) $
			\label{algo:DFST:1:Parent2}
			\State $k$ = FIND-OPT-PARENT($ \mathcal{G}_{N+1}' $, $P_{u}$, $P_{b}$) \label{algo:DFST:1:2}
			\State Set node $k$ as the parent node of $i$ in the graph $ \mathcal{G}_{N+1}' $, add a node $i$ and an edge $(k,i)$ to the graph $ \mathcal{G}_{N+1}' $, and set the depth of the node $i$ to $d_{k}+1$
			\EndFor
		\end{algorithmic} 
	\end{algorithm}
	
	%Find the largest depth $d_\mathrm{div}$ of the diverging conflict parents in $ \mathcal{G}_{N+1}' $
	%Find the union depths set of crossing conflict parents $L_{b}$ in $ \mathcal{G}_{N+1}' $
	%\renewcommand{\thealgorithm}{2}
	\begin{algorithm}[tb]
		\caption{FIND-OPT-PARENT}
		\label{algo:DFST:2} 
		\begin{algorithmic}[1] 
			\Require{$ \mathcal{G}_{N+1}' $, $P_{u}$, $P_{b}$} 
			\Ensure{Optimal Parent node $k$}
			%		\For{$j = 1,2,...,|P|$} 
			\State  $ l_{u} = \max d_{m}, \mathrm{s.t.}\; m \in P_{u} $
			\State $ L_{b} = \{d_{n} \mid n \in P_{b}\} $
			%		\EndFor
			\State Find $\min{d_k}, \mathrm{s.t.}\; \{ k \in P_{u} \cup P_{b} \mid d_{k}+1 > l_{u}, \mathrm{and}\; (d_{k}+1) \cap L_{b} = \varnothing\}$ \label{algo:DFST:2:1}
			\State \Return $k$
		\end{algorithmic} 
	\end{algorithm}
	
%	In this case, we find the largest depth $ l_{u} $ of the unidirectional edge parents, because the target depth of CAV $ i $ should not surpass that of any of these CAVs.
	
	Then, in Line~\ref{algo:DFST:1:2} of Algorithm~\ref{algo:DFST:1},~\ie, Algorithm~\ref{algo:DFST:2}, we treat the conflict types differently. If the parent node $ j $ is in the diverging or reachability conflict set of CAV $ i $, \ie, $ j \in \mathcal{D}_i \cup \mathcal{R}_i $, $ i $ cannot surpass $ j $ because of the overtaking restriction or acceleration limitation. {The target depth of CAV $ i $ should not surpass the depth of the CAVs in $ \mathcal{D}_i $ or $ \mathcal{R}_i $,~\ie, the largest depth $ l_{u} $ of the unidirectional edge parents.} In other cases, the parent node $ j $ is in the crossing conflict set $ \mathcal{C}_i $ or converging set $ \mathcal{V}_i $ of CAV $ i $. It means CAVs $ i $ and $ j $ cannot arrive at the intersection simultaneously, but their arrival order can be exchanged. Either CAV $ i $ or CAV $ j $ can pass the intersection first; therefore, we find the union depth set of the crossing conflict parents $ L_{b} $. To find the optimal parent $ k $, we should select the proper depth of CAV $ i $. Because the depth of the parent $ k $ is $ d_k $, the depth of CAV $ i $ is $ d_{k}+1 $. As mentioned before, $ d_{k}+1 $ should be neither smaller than $ l_{u} $ nor have an intersection with the set $ L_{b} $. Considering traffic efficiency, the depth $ d_{k} $ should be as small as possible, as shown in Line~\ref{algo:DFST:2:1} in Algorithm~\ref{algo:DFST:2}. The application of the iDFST method in Example~\ref{exp:1} is shown in Table~\ref{tab:iDFST} and the output iDFST is plotted in Fig.~\ref{fig:SpanngTree_Optimized}.

	\begin{figure}[!t]
		\centering
		\subfigure[DFST Spanning Tree\label{fig:SpanngTree_Old}]
		{\includegraphics[width=0.6\linewidth]{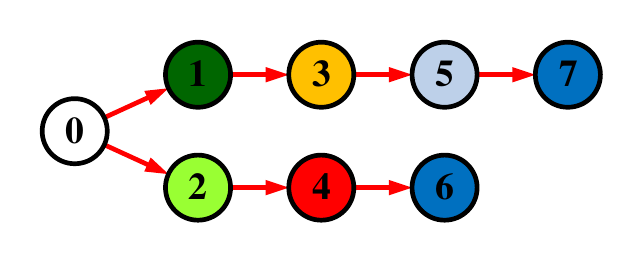}}
		\subfigure[iDFST Spanning Tree\label{fig:SpanngTree_Optimized}]
		{\includegraphics[width=0.52\linewidth]{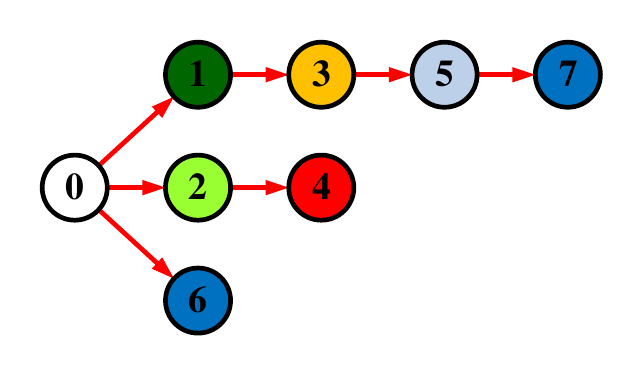}}
		\subfigure[MCC Spanning Tree\label{fig:SpanngTree_MCC}]
		{\includegraphics[width=0.42\linewidth]{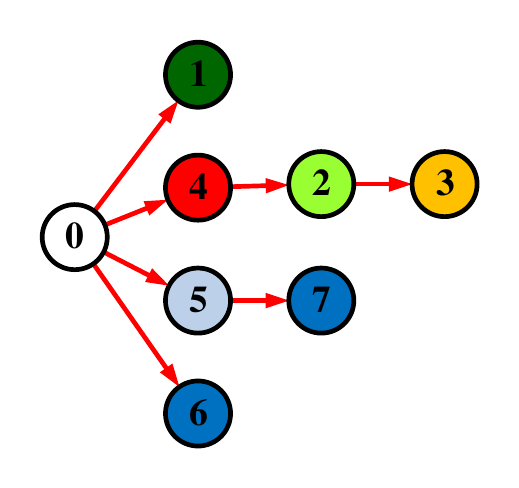}}
		\caption{Comparison of spanning tree results of three different methods. The DFST and iDFST methods yield $ d_\mathrm{all} = 4 $ and the MCC method yields $ d_\mathrm{all} = 3 $. Note that the average depth of the nodes of iDFST result is smaller than that of DFST result.}
		\label{fig:SpanningTree}
	\end{figure}
	
	\begin{table*}[!t]
		% increase table row spacing, adjust to taste
		\renewcommand{\arraystretch}{1.3}
		% if using array.sty, it might be a good idea to tweak the value of
		\centering
		\caption{Application of the iDFST method to the CDG in Fig.~\ref{fig:ConflictDirectedGraph}, resulting in Fig.~\ref{fig:SpanngTree_Optimized}.}
		\label{tab:iDFST}
		\begin{tabular}{c|c|c|c|c|c|c|c}
			\hline
			Candidate Node $ \mathbf{i} $ & \bfseries 1 & \bfseries 2 & \bfseries 3 & \bfseries 4 & \bfseries 5 & \bfseries 6 & \bfseries 7\\
			\hline
			$ P_{u} = \mathcal{D}_{i} \cup \mathcal{R}_{i} $ & $ \{0\} $ & $ \{0\} $ & $ \{0\} $ & $ \{0\} $ & $ \{0\} $ & $ \{0\} $ & $ \{1,5,6\} $ \\
			$ P_{b} = \mathcal{C}_{i} \cup \mathcal{V}_{i} $ & $ \varnothing $ & $ \varnothing $ & $ \{2\} $ & $ \{2\} $ & $ \{2,3\} $ & $ \{3\} $ & $ \{3\} $ \\
			$ l_{u} $ & $ 0 $ & $ 0 $ & $ 0 $ & $ 0 $ & $ 0 $ & $ 0 $ & $ 3 $ \\
			$ L_{b} $ & $ \varnothing $ & $ \varnothing $  & $ \{1\} $  & $ \{1\} $ & $ \{1,2\} $ & $ \{2\} $ & $ \{2\} $ \\
			\hline
			Selected Parent Node $ \mathbf{k} $ &\bfseries 0  &\bfseries 0 &\bfseries 1 &\bfseries 2 &\bfseries 3 &\bfseries 0 &\bfseries 5 \\
			\hline
			$ d_{i} $ & $ 1 $ &  $ 1 $ & $ 2 $ & $ 2 $ & $ 3 $ & $ 1 $ & $ 4 $ \\
			$ d_\mathrm{all} $ & $ 1 $ &  $ 1 $ & $ 2 $ & $ 2 $ & $ 3 $ & $ 3 $ & $ 4 $ \\
			\hline
		\end{tabular}
	\end{table*}
	
	We promote the proposition~\ref{Pro:DFST2iDFST:EachStep} to show the depth characteristics of the two algorithms.
	\begin{proposition}
	\label{Pro:DFST2iDFST:EachStep}
		In deciding layer for each node, the iDFST method always arranges the node at the same or smaller depth compared to DFST method.
	\end{proposition}

%Since the depth is consecutive integers, it equals to a linear arrangement $ d_{i}^{iDFST} = d_{i-1}^{iDFST} + 1$, which contradicts to the algorithm design in Algorithm~\ref{algo:DFST:2}. 

	\begin{IEEEproof}
		We use contradiction to prove the proposition. Without loss of generality, we assume in deciding the layer of vehicle $ i $, the target layer depth of DFST $ d_{i}^{DFST} $ is smaller than that of iDFST $ d_{i}^{iDFST} $ for the first time. Since it is the first time $ d_{i}^{DFST} < d_{i}^{iDFST} $ holds, for vehicles $ j $ with smaller index $ i $, it holds $ d_{j}^{DFST} \geq d_{j}^{iDFST} $. It means that for the vehicles with $ j < i $, iDFST always puts them at smaller depth. Thus for every node $ j $ in $ P_{b} $ and $ P_{u} $, $ d_{j}^{DFST} \geq d_{j}^{iDFST} $. Thus, $ d_{j,\max}^{DFST} \geq d_{j,\max}^{iDFST} $. Since DSFT method use the largest depth of the nodes in $ P_{b} \cup P_{u} $ to decide $ d_{i}^{DFST} $, we obtain $ d_{i}^{DFST} \geq d_{j,\max}^{DFST}$. And if $ d_{i}^{DFST} < d_{i}^{iDFST} $, for vehicle $ i $ we have $ d_{i}^{iDFST} > d_{i}^{DFST} \geq d_{j,\max}^{DFST} \geq d_{j,\max}^{iDFST}$, which leads to $ d_{i}^{iDFST} > d_{j,\max}^{iDFST}$. {It means that in iDFST, the depth of CAV $ i $ $ d_{i}^{iDFST} $ is always bigger than the previous largest depth $ d_{j,\max}^{iDFST} $. Since the spanning tree depths are consecutive integers, it infers that the depth in iDFST is always growing,~\ie, at least $ d_{i}^{iDFST} = d_{i-1}^{iDFST} + 1$. However, in Algorithm~\ref{algo:DFST:2}, depth of CAV $ i $ $ d_{i}^{iDFST} \leq d_{i-1}^{iDFST} $ is possible, which leads to a contradiction.} Therefore, it holds $ d_{i}^{DFST} \geq d_{i}^{iDFST} $ for every node $ i $.
	\end{IEEEproof}
	
	{
	Proposition~\ref{Pro:DFST2iDFST:EachStep} implies that both DFST and iDSFT are based on FIFO principle, and iDSFT further minimizes the depth of every CAV at each step. It also means that iDFST tries to minimize ATTD of the CAV in scheduling it. However, since it is locally optimized, minimum ATTD for all CAVs can not be guaranteed. From proposition~\ref{Pro:DFST2iDFST:EachStep}, since $ d_{i}^{DFST} \geq d_{i}^{iDFST} $ holds for every node, it is evident to conclude theorem~\ref{def:DFST2iDFST}.
	}

	\begin{theorem}
	\label{def:DFST2iDFST}
		In solving the SMALLEST-DEPTH problem as shown in Proposition~\eqref{equ:SmallestDepth}, iDFST method always obtains the same or smaller overall depth of the spanning tree compared to DFST method.
	\end{theorem}

	The main improvement of the iDFST method is in Algorithm~\ref{algo:DFST:2}. In \cite{xu2018distributed}, the conflict parent nodes are treated as one union conflict set $ P = P_{u} \cup P_{b} $. Therefore, the largest depth $ d_{k} $ of all the parent nodes $ P $ are found in the same way. For instance, in the DFST method, the parent nodes of CAV $ 6 $ are the nodes $ \{0,3\} $, which causes the largest depth $ d_{k} $ of the parent nodes to be $ d_{3} = 2 $; therefore, the DFST method has $ d_{6} = 2+1 = 3 $, as shown in Fig.~\ref{fig:SpanngTree_Old}. On the contrary, in the iDFST method, the conflict nodes are separated into two sets $ P_{u}, P_{b} $. By distinguishing whether the parent nodes can be surpassed or not, the smaller $ d_\mathrm{all} $,~\ie, the locally optimal solution is obtained. As shown in Fig.~\ref{fig:SpanngTree_Optimized}, CAV $ 6 $ is ranked $ d_{6} = 1 $ in the iDFST method. Therefore, the depth of CAV $ 6 $ is decreased from $ 3 $ to $ 1 $ by iDFST method. Even if the total depth of DFST and iDFST spanning tree are both $ d_\mathrm{all} = 4 $, the average depth of the nodes in iDFST spanning tree is $ 2 $, which is smaller than that of DFST spanning tree $ 2.28 $.
%	the total depth of the $ 7 $ CAVs is $ d_\mathrm{all} = 6 $ in the DFST method whereas the iDFST method optimizes it to $ d_\mathrm{all} = 5 $.
	{
	\begin{remark}
		We conclude that the DFST method focuses on the feasibility of the passing order problem. The proposed iDFST method additionally considers the optimality,~\ie, finds the smallest depth for each CAV, but the solution is still found CAV by CAV,~\ie, based on the FIFO principle. In other words, the iDFST solution is locally optimal in arranging each CAV. Because each CAV needs to check all the conflict relationships of the parent nodes, the overall computational complexity is $ O(N^2) $. In the simulation, however, the spanning tree of the previous step is passed on to generate the spanning tree of the current step. It means that only one CAV needs to be handled in each calculation step. Therefore the simulation computational complexity is $ O(N) $. What's more, to further reduce the computational complexity, a higher-efficiency data structure in saving the previous scheduling result is needed.
	\end{remark}
	}
	
	\subsection{Minimum Clique Covering Method}
	\label{Sec:MCC}
	
	As mentioned before, we aim to minimize the overall depth $ d_\mathrm{all} $ of the spanning tree, which corresponds to the evacuation time of all the CAVs. If we reconsider the traffic scenario in Fig.~\ref{fig:Intersection}, a maximum number of six CAVs can pass the intersection simultaneously. In other words, because there is a limited number of CAVs in the control zone, the overall depth of the spanning tree can be minimized by maximizing the CAV groups that can drive through the intersection at the same time. From this point of view, another method to describe the conflict relationships of the CAV is to describe their coexistence relationships.
	
	\begin{definition}[Coexisting Undirected Graph]
		\label{def:CUG}
		The CUG is defined as the complement graph of the CDG $ \mathcal{G}_{N+1} $, excluding node $ 0 $. Thus, $ \overline{\mathcal{G}}_{N} = \left(\overline{\mathcal{V}}_{N},\overline{\mathcal{E}}_{N}\right) $, where $ \overline{\mathcal{V}}_{N} = \mathcal{V}_{N+1} - \{0\} $, $ \overline{\mathcal{E}}_{N} = \{(i,j) \mid i,j \in \overline{\mathcal{V}}_{N} , i \neq j, \mathrm{and} \ (i,j) \notin \mathcal{E}_{N+1}\} $.
	\end{definition}

	In Example~\ref{exp:1}, the CDG is drawn in Fig.~\ref{fig:ConflictDirectedGraph} and the CUG in Fig.~\ref{fig:CoexistUndirectedGraph}. Because the CDG edge $ \mathcal{E}_{N+1} $ implies that two CAVs have conflicts and the CUG is the complement graph of the CDG, the CUG edge $ \overline{\mathcal{V}}_{N} $ implies that the two CAVs are conflict-free,~\ie, they can pass through the intersection simultaneously.
	
	Recall that we aim to minimize the overall depth $ d_\mathrm{all} $ of the spanning tree. Because the total number of CAVs,~\ie, the node number $ \left| \overline{\mathcal{V}}_{N} \right|$, is a constant value $ N $, minimizing the overall depth of the spanning tree is equivalent to widening its average width. Thus, solving the optimal passing solution is equivalent to finding the minimum number of groups of the combinations of coexisting CAVs in the CUG. Note that this conclusion corresponds to common sense. The maximum groups of the coexisting CAVs represent the maximizing of the time utility of the intersection, which minimizes the overall evacuation time.
	
	In graph theory, the clique is suitable for describing the coexistence relationship of the CAVs. The definition of the clique is shown in Definition~\ref{def:clique}, and the cliques of sizes $ 3,4,5,6 $ are plotted in Fig.~\ref{fig:Cliques}.
	
	\begin{figure}[!t]
		\centering
		{\includegraphics[width=\linewidth]{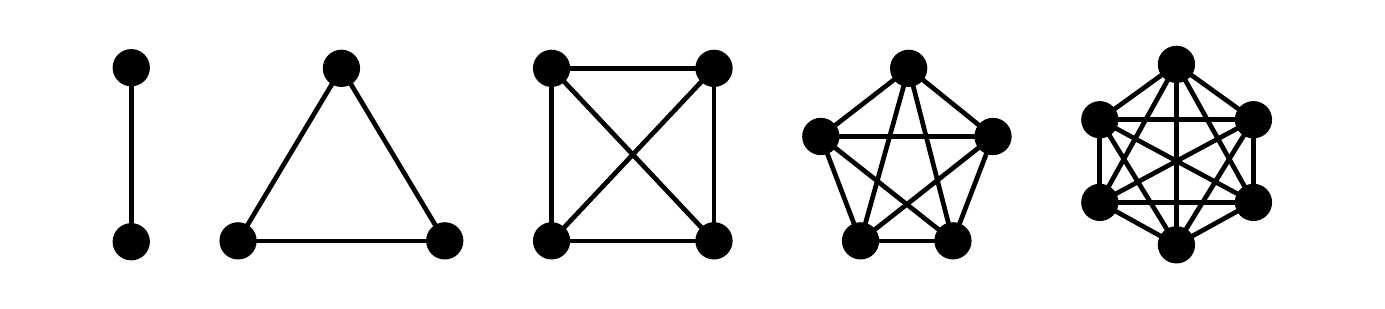}}
		\caption{Cliques of sizes $ 2,3,4,5,6 $.}
		\label{fig:Cliques}
	\end{figure}
	
	\begin{definition}[Clique\cite{luce1949method}]
		\label{def:clique}
		A clique $ C $ in an undirected graph $ G = (V, E) $ is a subset of the nodes, $ C \subseteq V, $ such that every two distinct nodes are adjacent. This is equivalent to the condition that the subgraph of $ G $ induced by $ C $ is a complete graph.
	\end{definition}
	
	The clique $ C $ is the aforementioned group. Because the edges are selected from the CUG $ \overline{\mathcal{G}}_{N} $, the CAVs in one clique can pass through the intersection simultaneously in a conflict-free manner. In Fig.~\ref{fig:CoexistUndirectedGraph}, $ C=\{1,2\} $ is a clique, but it is not the maximum clique because cliques of larger sizes exist,~\eg, $ \{1,2,6\} $. The maximum clique of a graph $ G $ is a clique with the maximum number of nodes. Particularly, the number of nodes in the maximum clique in $ G $ is called the clique number, denoted as $ \omega(G) $. In Fig.~\ref{fig:CoexistUndirectedGraph}, $ \omega(\overline{\mathcal{G}}_{N}) = 4 $, which corresponds to the traffic scenario in Example~\ref{exp:1}, where a maximum number of $ 4 $ CAVs can pass through the intersection simultaneously. Note that in the overall intersection scenario Fig.~\ref{fig:Intersection}, $ \omega(\overline{\mathcal{G}}_{N}) = 6 $ as long as there is an adequate number of CAVs on the coexisting lanes.
	
	Recall that our target is to find the minimum number of groups in the CUG,~\ie, the minimum number of cliques covering all the nodes in the CUG $ \overline{\mathcal{G}}_{N} $. Therefore, we define the MCC problem as follows.
	
	\begin{definition}[Minimum Clique Cover (MCC)~\cite{karp1972reducibility}]
		\label{def:MCC}
		A clique cover of a graph $ G=(V,E) $ is a partition of $ V $ into $ k $ disjoint subsets $ V_{1},V_{2},\dots, V_{k}$, such that for $ 1 \leq i \leq k $, the subgraph induced by $ V_{i} $ is a clique,~\ie, a complete graph. The MCC number of $ G $ is the minimum number of subsets in a clique cover of $ G $, denoted as $ \theta(G) $.
	\end{definition}
	
	The MCC number $ \theta(\overline{\mathcal{G}}_{N}) $ of the CUG represents the minimum number of cliques covering the CUG. Because the cliques in the CUG represent the CAVs that can pass through the intersection simultaneously, these CAVs in the same clique can be scheduled to drive through the intersection at the same time,~\ie, be scheduled at the same depth of the spanning tree. Note that the MCC does not lead to the maximum clique and vice versa. For example, considering the CUG in Fig.~\ref{fig:CoexistUndirectedGraph}, the MCC number $ \theta(\overline{\mathcal{G}}_{N}) = 3 $ and the corresponding cliques are listed in Table~\ref{tab:MCC}. The maximum cliques $ \{1,4,5,6\}$ appears in solution $ 1 $ but not in the subsequent solutions. We conclude that for an arbitrary intersection scenario, the coexistence relationship of the incoming CAVs is perfectly depicted in the CUG $ \overline{\mathcal{G}}_{N} $. The clique number $ \omega(\overline{\mathcal{G}}_{N}) $ of the CUG represents the maximum number of CAVs that can simultaneously pass through the intersection, whereas the MCC number $ \theta(\overline{\mathcal{G}}_{N}) $ represents the possible minimum passing order solution.
	
	\begin{table}[!t]
		% increase table row spacing, adjust to taste
		\renewcommand{\arraystretch}{1.3}
		% if using array.sty, it might be a good idea to tweak the value of
		\centering
		\caption{Possible minimum clique cover solutions of Fig.~\ref{fig:CoexistUndirectedGraph}. Note that $ \theta(\overline{\mathcal{G}}_{N}) = 3$ in this graph.}
		\label{tab:MCC}
		\begin{tabular}{c|c|c|c}
			\hline
			\diagbox[width=8em]{\textbf{Solutions}}{\textbf{Subsets}} & $ {V}_{1} $ & $ {V}_{2} $  & $ {V}_{3} $ \\
			\hline
			\bfseries 1 & $ \{1,4,5,6\} $ & $ \{2,7\} $ & $ \{3\} $\\
			\bfseries 2 & $ \{4,5,6\} $ & $ \{2,7\} $ & $ \{1,3\} $\\
			\bfseries 3 & $ \{1,3,4\} $ & $ \{5,6\} $ & $ \{2,7\} $\\
			\bfseries 4 & $ \{1,5,6\} $ & $ \{2,7\} $ & $ \{3,4\} $\\
			\hline
		\end{tabular}
	\end{table}
	
	%	\begin{remark}
		%
		%	\end{remark}
	
	Note that $ \theta(\overline{\mathcal{G}}_{N}) = 3 $ in CUG Fig.~\ref{fig:CoexistUndirectedGraph}, which means the spanning tree generated by the MCC solutions are in $ d_\mathrm{all} = 3 $. It also means that the theoretical evacuation times of these solutions are the same,~\ie, the theoretical values of $ t_\mathrm{evac} $ are the same. Thus, the performance index of $ t_\mathrm{evac} $ in~\eqref{equ:EvacuationTime} corresponds to $ \theta(\overline{\mathcal{G}}_{N}) $, which is the global optimal passing order considering the evacuation time. {In addition, we further consider the ATTD among these solutions as a secondary index. The definition of $ t_\mathrm{ATTD} $ in~\eqref{equ:TravelTimeDelay} can be rewritten in graphical terms as
	\begin{equation}
		\label{equ:ATTD_MCC}
		\begin{aligned}
			\min \; & \sum_{i=1}^{k} d_{i}|V_{i}|, \, i \in \mathbb{N}^+\\
			\mathrm{subject\; to: }\; & \sum_{i=1}^{k} |V_{i}| = N,
		\end{aligned}
	\end{equation}
	where $ V_i, \, 1 \leq i \leq k $ are the $ k $ cliques obtained from MCC algorithm. Since clique $ V_i $ is a sub-graph induced from CUG $ \overline{\mathcal{G}}_{N} $, $ |V_{i}| $ represents the node number of clique $ V_i $. Because vehicles of number $ |V_{i}| $ in one clique $ V_{i} $ are arranged in the same layer,~\ie, of the same depth $ d_{i} $, $ \sum_{i=1}^{k} d_{i}|V_{i}| $ represents the weighted summing depth of the spanning tree. Since the total vehicle number is a constant value ($ \sum_{i=1}^{k} |V_{i}| = N $), minimizing the average depth of the spanning tree equals to $ \min \; \sum_{i=1}^{k} d_{i}|V_{i}|, \, i \in \mathbb{N}^+ $.
	}
	% $ d_{i} $ is the spanning tree depth of the CAV nodes in $ V_{i} $, as defined in Section~\ref{Sec:iDFST}.
	
	In this circumstance, it is evident that the subsets $ V_{i} $ should be arranged in the descending order to decrease the average $ d_{i} $ of $ N $ CAVs in~\eqref{equ:ATTD_MCC}. Because of the same reason, we prefer to choose the solutions with the maximum cliques when they have the same $ \theta(\overline{\mathcal{G}}_{N}) $. For example, in Table~\ref{tab:MCC}, we tend to choose the Solution $ 1 $, and the corresponding optimized spanning tree is scheduled as $ \{1,4,5,6\} \rightarrow \{2,7\} \rightarrow \{3\}$, as shown in Fig.~\ref{fig:SpanngTree_MCC}.
	
%	The CUG $ \overline{\mathcal{G}}_{N} $ only contains the coexistence information of the CAVs, whereas the passing order of the CAVs on the same lane should be strictly according to their relative positions because of Assumption~\ref{asp:LaneChange}. The MCC method can generate solutions that cannot be directly executed,~\eg, Solution $ 2 $ in Table~\ref{tab:MCC}. The spanning tree is formed as $ \{1,3,6\} \rightarrow \{4,7\} \rightarrow \{2\} \rightarrow \{5\} $. However, this solution is not feasible because CAV $ 5 $ is in front of CAV $ 6 $. We prove that this can be solved by exchanging the unfeasible sequence, as shown in Proposition~\ref{Pro:1}.
%	
%	
%	\begin{proposition}
%		\label{Pro:1}
%		If the MCC method leads to an unfeasible solution, where the CAVs $ i $ and $ j $ have conflicting trajectories, exchanging the positions of $ i $ and $ j $ solves the conflict. The new solution is also an MCC solution.
%	\end{proposition}
%	
%	Because the unfeasible solution is related to CAVs on the same lane, it is self-evident that exchanging the conflicted CAVs solves the conflict. Meanwhile, the CAVs on the same lane have identical edges in the CUG to those of the other CAVs. Therefore, permutation and combination of $ n $ CAVs on the same lane result in $ n! $ similar MCC solutions, where only one is the feasible solution.
	
	\begin{algorithm}[tb]
		\caption{Minimum Clique Cover Method}
		\label{algo:MCC} 
		\begin{algorithmic}[1] 
			\Require{Coexisting Undirected Graph $\overline{\mathcal{G}}_{N} = \left(\overline{\mathcal{V}}_{N},\overline{\mathcal{E}}_{N}\right) $} 
			\Ensure{Spanning Tree $ \mathcal{G}_{N+1}' = \left( \mathcal{V}_{N+1},\mathcal{E}_{N+1}' \right) $}
			\State Calculate the complement graph $\overline{\overline{\mathcal{G}}}_{N} = \mathcal{G}_{N}$ \label{algo:MCCBegin}
			\State Find the breadth-first search sequence $ K=(v_{1},v_{2},\dots,v_{N}) $ of the nodes in $ \mathcal{G}_{N} $
			\For{each node $ v_{i} $ of $ \mathcal{G}_{N} $ in the sequence $ K $} 
			\State assign node $ v_{i} $ the smallest possible clique index
			\EndFor \label{algo:MCCEnd}
			%			\State $ V_{1},\dots,V_{k}$ = Minimum-Clique-Cover($ \mathcal{G}_{N+1}' $)
			\State Rank $ V_{1}, V_{2}, \dots,V_{k}$ in the descending order and obtain the spanning $ \mathcal{G}_{N+1}' $ \label{algo:MCC:Spanning} 
			\State Exchange the conflicting CAVs of $ \mathcal{G}_{N+1}' $ in the same lane if necessary
			\State \Return $ \mathcal{G}_{N+1}' $
		\end{algorithmic} 
	\end{algorithm}
	
	%	can be neglected in computational complexity analysis
	%Recall that we intend to find the solutions with maximum clique. Thus, Greedy Independent Set (GIS) method is employed to find the MCC solution, which finds and removes a maximal independent set repeatedly and greedily. This heuristic method meets our requirement of finding the maximum clique first.
	%	We sum up the MCC method in Algorithm~\ref{algo:MCC}. Line~\ref{algo:MCCBegin} to~\ref{algo:MCCEnd} is the MCC process which finds the subsets $ V_{1},\dots,V_{k}$, and  Line~\ref{algo:MCC:Spanning} takes constant time to calculate, which relates to the scenario lane number $ l $.
	After introducing the MCC problem, we describe the formal complexity of the original SMALLEST-DEPTH problem in Theorem~\ref{def:NPC}.
	\begin{theorem}
		\label{def:NPC}
		SMALLEST-DEPTH problem is NP-complete.
	\end{theorem}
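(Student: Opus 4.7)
The plan is to establish NP-completeness in two stages: membership in NP, and NP-hardness via a reduction from the Minimum Clique Cover (MCC) problem of Definition~\ref{def:MCC}.

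For NP-membership, given a certificate consisting of a candidate spanning tree $\mathcal{G}_{N+1}'$ together with depth labels, I would verify in polynomial time that (i) every edge of $\mathcal{G}_{N+1}'$ is compatible with the conflict structure encoded in $\mathcal{E}_{N+1}$, (ii) no two nodes placed at the same depth share any conflict edge in $\mathcal{E}_{N+1}$ (the feasibility condition from Lemma~\ref{lemma:ConflictFree}), and (iii) the maximum root-to-leaf depth equals $k$. Each check runs in $O(N^2)$, so SMALLEST-DEPTH lies in NP.

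For NP-hardness, I would reduce MCC, which is NP-complete by~\cite{karp1972reducibility}, to SMALLEST-DEPTH. Given an MCC instance $\langle H=(V_H,E_H), k\rangle$, construct a CDG $\mathcal{G}_{N+1}$ in polynomial time by setting $\mathcal{V}_{N+1}=V_H\cup\{0\}$, letting the bidirectional edge set $\mathcal{E}_{N+1}^{b}$ be exactly the complement of $E_H$ on $V_H$, and inserting unidirectional edges $(0,v)$ for every $v\in V_H$ to represent the virtual-leader diverging conflicts. By Definition~\ref{def:CUG}, the CUG of this construction is precisely $H$. The correctness argument then runs in two directions: a feasible spanning tree with $d_\mathrm{all}\le k$ partitions $V_H$ by depth into at most $k$ levels, and Lemma~\ref{lemma:ConflictFree} forces each level to be pairwise conflict-free in $\mathcal{G}_{N+1}$, hence pairwise adjacent in $H=\overline{\mathcal{G}}_N$, giving a $k$-clique cover. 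Conversely, given a clique cover $V_1,\dots,V_{k'}$ with $k'\le k$, placing $V_i$ at depth $i$ with parent pointers drawn from $V_{i-1}\cup\{0\}$ yields a valid DFST by Lemma~\ref{lemma:CDG}, since all non-root conflict edges were declared bidirectional and therefore impose no forced ordering.

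The step I expect to be the principal obstacle is formally verifying that the constructed instance is a \emph{feasible} CDG in the sense of Lemma~\ref{lemma:CDG}, and in particular that the FIFO-style indexing convention of~\eqref{equ:ElementRelationship} can be honored by a suitable relabeling of $V_H$ consistent with the bidirectional conflict structure. A secondary subtlety, tied to Proposition~\ref{Pro:1}, is confirming that any reordering needed to respect same-lane (unidirectional) constraints never inflates $d_\mathrm{all}$, so that the MCC number of $H$ and the minimum achievable spanning-tree depth coincide exactly, rather than only up to an inequality.
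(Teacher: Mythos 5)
Your proposal is correct in outline and shares the paper's combinatorial core (same-depth layers of a feasible spanning tree correspond exactly to cliques of the complement graph $\overline{\mathcal{G}}_{N}$), but it runs the hardness reduction in the opposite direction from the paper --- and yours is the logically correct one. The paper writes ``SMALLEST-DEPTH $\leq_{p}$ MINIMUM-CLIQUE-COVER'' and maps a given CDG to its CUG; as stated, that only shows SMALLEST-DEPTH is \emph{no harder} than MCC, which does not establish NP-hardness. What is actually needed, and what you do, is MINIMUM-CLIQUE-COVER $\leq_{p}$ SMALLEST-DEPTH: start from an \emph{arbitrary} graph $H$, build a CDG whose bidirectional edges are the complement of $E_H$ with node $0$ attached by unidirectional edges, and observe that the CUG of this instance is exactly $H$. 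Because complementation is an involution and the paper proves the correspondence as an ``if and only if,'' the content needed for your direction is implicitly present in the paper's proof body, but your construction is the one that makes the hardness claim sound; it also sidesteps the question of whether every physically realizable intersection yields every graph, since the decision problem in~\eqref{equ:SmallestDepth} is posed over abstract CDGs.

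One shared soft spot: in the converse direction (clique cover $\Rightarrow$ feasible spanning tree), assigning $V_i$ to depth $i$ with a parent in $V_{i-1}\cup\{0\}$ requires that the chosen parent edge actually lie in $\mathcal{E}_{N+1}$; a node $v\in V_i$ that is $H$-adjacent to all of $V_{i-1}$ has no such edge. This is repairable (promote any such $v$ into $V_{i-1}$, which remains a clique, and iterate), and the paper's own one-line appeal to Proposition~\ref{Pro:1} is no more rigorous, but you should close it explicitly. Your worry about the indexing convention of~\eqref{equ:ElementRelationship} dissolves in your construction, since all non-root conflicts are bidirectional and impose no ordering.
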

	
	\begin{IEEEproof}
		First, we prove that $ \text{SMALLEST-DEPTH} \in \text{NP} $. Assume we have a CDG $ \mathcal{G}_{N+1} = \left( \mathcal{V}_{N+1},\mathcal{E}_{N+1} \right) $, a spanning tree $ \mathcal{G}_{N+1}' = \left( \mathcal{V}_{N+1},\mathcal{E}_{N+1}' \right) $, and an integral $ k $. The certificate we choose is $ k = |\mathcal{V}_{N+1}| $. The verification algorithm consists of two procedures,~\ie, whether the spanning tree $ \mathcal{G}_{N+1}' $ is feasible and $ d_\mathrm{all} $ of the spanning tree is $ k $. Checking the edge connections of $ \mathcal{E}_{N+1} $ and the depth of each node of $ \mathcal{V}_{N+1} $ can be evidently accomplished in polynomial time.
		
		Secondly, we prove that the SMALLEST-DEPTH problem defined in Proposition~\eqref{equ:SmallestDepth} is NP-hard by showing that it can be reduced to the MINIMUM-CLIQUE-COVER problem~\cite{karp1972reducibility} defined in Definition~\ref{def:MCC},~\ie, SMALLEST-DEPTH $ \leq_{p} $ MINIMUM-CLIQUE-COVER. As mentioned before, given a CDG $ \mathcal{G}_{N+1} = \left( \mathcal{V}_{N+1},\mathcal{E}_{N+1} \right) $, we define the corresponding CUG as $ \overline{\mathcal{G}}_{N} = \left(\overline{\mathcal{V}}_{N},\overline{\mathcal{E}}_{N}\right) $, where $ \overline{\mathcal{V}}_{N} = \mathcal{V}_{N+1 } - \{0\} $, $ \overline{\mathcal{E}}_{N} = \{(i,j) \mid i,j \in \overline{\mathcal{V}}_{N}, i \neq j, \mathrm{and} \ (i,j) \notin \mathcal{E}_{N+1}\} $. The transformation of the CDG into CUG is easily achieved in polynomial time.
		
		If the CDG $ \mathcal{G}_{N+1} $ has a feasible spanning tree $ \mathcal{G}_{N+1}' $ with $ d_\mathrm{all} = k $, we claim that the corresponding CUG $ \overline{\mathcal{G}}_{N} $ has  $ \overline{\mathcal{V}}_{N} $ partitioned into $ k $ disjoint cliques. Because $ \mathcal{G}_{N+1}' $ is a feasible spanning tree, if $ i,j $ are the nodes of same depth, they should not have a conflicting relationship,~\ie, $ (i,j) \notin \mathcal{E}_{N+1} $. Because the CUG is the complement graph of the CDG, it is evident that $ (i,j) \in \overline{\mathcal{E}}_{N} $. Thus, the nodes of the same depth in the spanning tree  $ \mathcal{G}_{N+1}' $ are connected with each other,~\ie, they form cliques in $ \overline{\mathcal{G}}_{N} $. The overall depth of the spanning tree $ k $ corresponds to the clique cover number of $ \overline{\mathcal{G}}_{N} $.
		
		On the contrary, we assume the the CUG $ \overline{\mathcal{G}}_{N} $ has $ \overline{\mathcal{V}}_{N} $ partitioned into $ k $ disjoint cliques. We know that $ k $ disjoint cliques lead to a feasible passing order, that is, the CDG $ \mathcal{G}_{N+1} $ has a feasible spanning tree $ \mathcal{G}_{N+1}' $ with $ d_\mathrm{all} = k $.
	\end{IEEEproof}
	
	%	Even though the MCC method guarantees the global optimal solutions, finding the minimum clique cover in a general undirected graph $ G $ is proven to be NP-hard.
	
	We have proven that the SMALLEST-DEPTH problem can be reduced to the MCC problem, which implies that the MCC number $ \theta(\overline{\mathcal{G}}_{N}) $ of the CUG is the smallest possible $ d_\mathrm{all} $ of the spanning tree solutions,~\ie, the global optimal solution. However, solving the NP-hard problem is a difficult task in real deployment. In the simulation, for a small number of vehicles, we applied the brute-force method to find the MCC number,~\ie, the strictly global optimal solution. In the previous work, we have shown that when the traffic scenario is simple,~\ie, a maximum number of two vehicles are allowed to pass the intersection simultaneously, the CAV scheduling problem is reduced to a maximum matching problem~\cite{chen2021graph}. In this case, the global optimal solution is obtained in $ O(n^4) $ time.
	
	For a large number of vehicles and complicated intersection scenarios, the dimensional explosion makes the brute-force method impossible to deploy. Therefore, we apply a practical approach to solve the problem heuristically, as shown in Line~\ref{algo:MCCBegin} to~\ref{algo:MCCEnd} of Algorithm~\ref{algo:MCC}. The MCC problem of $ G $ is proved to be reduced to the graph coloring problem of $ \overline{G} $~\cite{garey1979guide}, and there are numerous heuristic methods to solve the graph coloring problem. Recall that we intend to find the solutions with larger cliques. Thus, we first generate a node sequence $ K=(v_{1},v_{2},\dots,v_{N}) $ by breadth-first search (BFS). Then, we greedily assign node $ v_{i} $ the smallest possible color,~\ie, the clique index according to the node sequence $ K $, forming the subset cliques $ V_{1},\dots,V_{k}$. Line~\ref{algo:MCC:Spanning} is the spanning process, which arranges the cliques into a spanning tree and has a constant calculation time.
	
	{
	\begin{remark}
		The BFS sequence generation needs $ O(\left|{\mathcal{E}_{N}}\right| + \left|{\mathcal{V}_{N}}\right|) $. Afterwards, the MCC is solved greedily in $ O(\left|{\mathcal{V}_{N}}\right|) $ and the ranking of the spanning tree requires a constant time. Hence, the overall computational complexity of the heuristically solved MCC method,~\ie, Algorithm~\ref{algo:MCC} is $ O(\left|{\mathcal{E}_{N}}\right| + 2\left|{\mathcal{V}_{N}}\right| + 1) = O(N) $, which is the same as $ O(N) $ of the DFST and iDFST methods. In the simulation, we will show that even if the heuristically solved MCC method is not the strictly globally optimal solution, it outperforms the other two methods.
	\end{remark}
	}
	
	\subsection{Distributed Control}
	\label{Sec:DistributedControl}
	In the ideal communication condition, every CAV obtains driving information,~\eg, position and velocity, from the other CAVs in real-time. However, the communication resource,~\eg, the bandwidth, is always limited around an intersection. Therefore, we design a hierarchical framework to lower the communication burden. In the upper level, the central coordinator collects the information of the CAVs and assigns the scheduled arrival plans to them. In the lower level, a CAV executes its arrival time through distributed control. Similar to the virtual platoon analysis in~\cite{xu2018distributed}, we design the distributed control as follows.
	%	communication topology and stability can be analyzed to ensure the feasibility of the virtual platoon design. For simplicity, we present the
	
	\subsubsection{Geometric Topology}
	{As mentioned in Section~\ref{Sec:Methodology}, all the three methods,~\ie, DFST, iDFST, and MCC, lead to a feasible spanning tree $ \mathcal{G}_{N+1}' = \left( \mathcal{V}_{N+1},\mathcal{E}_{N+1}' \right) $, as shown in Fig.~\ref{fig:SpanningTree}, which forms the geometric topology of the virtual platoon. Because each node $ \mathcal{V}_{i} $ in $ \mathcal{G}_{N+1}' $ represents a CAV $ i $ and the edge $ \mathcal{E}_{i}' = (j,i) $ represents the virtual preceding CAV $ j $, the CAV can always follow its parent node CAV $ \mathcal{V}_{j} $. In other words, CAV $ i $ and its parent CAV $ j $ intend to maintain a constant desired car-following distance $ D_\mathrm{des} $ and the same velocity $ v_\mathrm{des} $.
	\begin{equation}
		\label{equ:CarFollowingDistance}
		\left\{
		\begin{array}{l}
			\displaystyle\lim_{t \rightarrow \infty}\left\|v_{i}(t)-v_{j}(t)\right\|=0 \\
			\displaystyle\lim_{t \rightarrow \infty}\left(p_{j}(t)-p_{i}(t)-D_\mathrm{des}\right)=0
		\end{array}, i \in \mathbb{N}^+,  \right.
	\end{equation}
	where $ p_{i}(t), p_{j}(t) $ denote the remaining distance of the CAVs $ i $ and $ j $ to the stopping line, and $ v_{i}(t), v_{j}(t) $ for their velocities. Lemma~\ref{lemma:ConflictFree} has proven that the CAVs at the same depth have no conflict relationship. Therefore, if CAVs $ i $ and $ j $ are of the same depth,~\eqref{equ:CarFollowingDistance} also aligns them at the same depth in arriving at the stopping line.
	}
	
%	$ v_{i}(t), v_{j}(t) $ and $ p_{i}(t), p_{j}(t) $ denote the velocity and position of the CAVs $ i $ and $ j $.
	
%	Meanwhile, to align the CAVs at the same depth to arrive at the stopping line, \eqref{equ:VelocityinSameDepth} is indirectly guaranteed by the virtual leading vehicle control feedback.
%	\begin{equation}
%		\label{equ:VelocityinSameDepth}
%		\displaystyle\lim_{t \rightarrow \infty}\left(p_{i}(t)-p_{j}(t)\right)=0.
%	\end{equation}
	%	As for the low-level CAV control, a linear feedback PID controller is designed to execute the virtual platooning~\cite{peppard1974string}. 
	
	\subsubsection{Communication Topology}
	
	\begin{figure}[!t]
		\centering
		\subfigure[iDFST\label{fig:CommunicationTopology_iDFST}]
		{\includegraphics[width=0.8\linewidth]{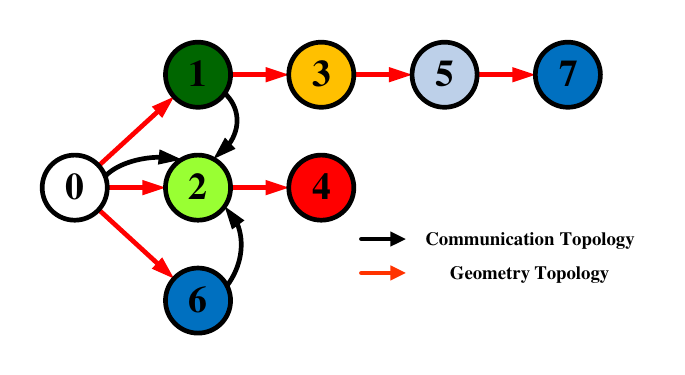}}
		\subfigure[MCC\label{fig:CommunicationTopology_MCC}]
		{\includegraphics[width=0.8\linewidth]{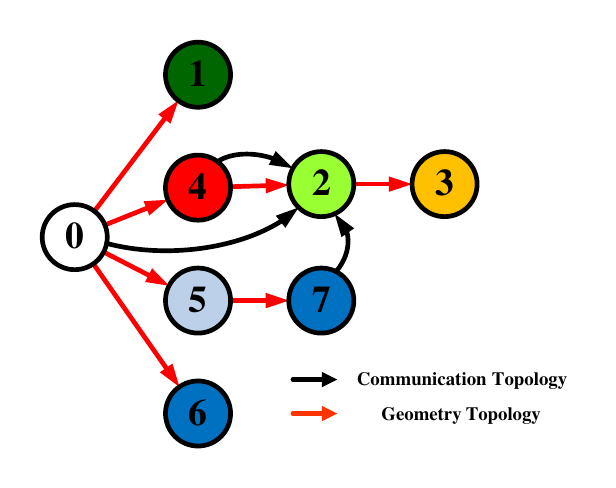}}
		\caption{Predecessor--leader following (PLF) communication topology illustration of the iDFST and MCC methods. Red edges represent the geometric topology of the virtual platoon, which is the result of different scheduling methods, as shown in Fig.~\ref{fig:SpanningTree}. The black edges represent the PLF communication topology of the virtual platoon, where each CAV collects information from the virtual preceding vehicle and the virtual leading vehicle.}
		\label{fig:CommunicationTopology}
	\end{figure}
	
	The communication topology represents the information exchange route among the CAVs. Various communication topologies have been studied in the field of vehicle platoon control~\cite{zheng2015stabilityandscalability}. For simplicity, we use the predecessor--leader following (PLF) topology, where each CAV communicates with its preceding vehicle in the virtual platoon and the virtual leading vehicle. Similar to the topology analysis in~\cite{xu2018distributed, zheng2015stabilitymargin}, we introduce $ \overline{\mathcal{G}}_{N+1}' = \left\{ \overline{\mathcal{V}}_{N+1},\overline{\mathcal{E}}_{N+1} \right\} $ as the communication graph of the spanning tree $ \mathcal{G}_{N+1}' = \left\{ \mathcal{V}_{N+1},\mathcal{E}_{N+1}' \right\} $. The nodes still represent the CAVs; thus, $ \overline{\mathcal{V}}_{N+1} = \mathcal{V}_{N+1} $. Meanwhile, the edges $ \overline{\mathcal{E}}_{N+1} $ represent the information exchange among the CAVs. The communication topologies of the iDFST and MCC spanning trees are shown in Fig.~\ref{fig:CommunicationTopology}. Specifically, the CAVs collect information from the following two types of CAVs.
	
	\begin{enumerate}[1)]
		\item \emph{Virtual leading vehicle}: because the CAV obtains its target layer in the scheduled spanning tree, every CAV in the spanning tree communicates with the virtual leading vehicle $ 0 $.
		\begin{equation}
			\left\{(0, j) \mid j \in \overline{\mathcal{V}}_{N+1}, \, j \neq 0 \right\} \in \overline{\mathcal{E}}_{N+1}.
		\end{equation}
		
		\item \emph{Virtual preceding vehicle}: the second type is the virtual preceding vehicle in the spanning tree $ \mathcal{G}_{N+1}' = \left\{ \mathcal{V}_{N+1},\mathcal{E}_{N+1}' \right\} $. This term is used to help the CAVs maintain a safe distance from the upper layer in the spanning tree. Note that the preceding vehicle in the spanning tree does not symbolize the preceding vehicle of the same lane in the simulation and vice versa. The CAVs $ 0 $ and $ 4 $ are the preceding vehicles of CAV $ 2 $ in the iDFST and MCC spanning trees. respectively.
		\begin{equation}
			\left\{(i, j) \mid (i,j) \in \mathcal{E}_{N+1}',\, i,j \neq 0 \right\} \in \overline{\mathcal{E}}_{N+1}.
		\end{equation}
		
		%		\item \emph{Vehicles in the same layer}: The last type is other CAVs which is in the same layer as CAV $ 7 $. In this case, CAV $ 3 $ is in the same layer as CAV $ 7 $ in iDFST method while in MCC method is CAV $ 4 $.
		%		\begin{equation}
			%			\left\{(i, j) \mid d_{i}=d_{j}, \, i,j \neq 0 \right\} \in \overline{\mathcal{E}}_{N+1}.
			%		\end{equation}
	\end{enumerate}
	
	The adjacency matrix $ \boldsymbol{A}=\left[a_{ij}\right] \in \mathbb{R}^{N \times N} $ and pinning matrix $ \boldsymbol{Q}=\left[q_{ij}\right] \in \mathbb{R}^{N \times N} $ are further introduced to represent the communication topology.
	\begin{equation}
		a_{ij}=\left\{
		\begin{array}{rl}
			1, & \mathrm{if} \, (i,j) \in \overline{\mathcal{E}}_{N+1} \\
			0, & \mathrm{else}
		\end{array}\right. , \, i,j \in \mathbb{N}^{+},
	\end{equation}
	where $ a_{ij} = 1 $ indicates that CAV $ i $ receives information from CAV $ j $. 
	
	\begin{equation}
		q_{ij}=\left\{
		\begin{array}{rl}
			1, & \mathrm{if} \, i=j, (0,j) \in \overline{\mathcal{E}}_{N+1} \\
			0, & \mathrm{else}
		\end{array}\right. , \, i,j \in \mathbb{N}^{+},
	\end{equation}
	where $ q_{ij} = 1 $ indicates that CAV $ i $ receives information from the virtual leading vehicle $ 0 $. {Therefore, $ \boldsymbol{Q} $ is a diagonal matrix.} 
	
	Further, a Laplacian matrix $ \boldsymbol{L}=\left[l_{ij}\right] \in \mathbb{R}^{N \times N} $ is also introduced as
	\begin{equation}
		l_{ij}=\left\{
		\begin{array}{ll}
			-a_{ij}, &  i \neq j \\
			\sum_{k=1}^{N} a_{ik}, & i = j
		\end{array}\right. , \, i,j \in \mathbb{N}^{+}.
	\end{equation}
	
	Note that we assume the information exchange is bidirectional; thus, $ a_{ij} = a_{ji} $. Hence, both $ \boldsymbol{A} $, $ \boldsymbol{Q} $ and $ \boldsymbol{L} $ are symmetric matrices.
	
	\begin{remark}
		The communication topology graph $ \overline{\mathcal{G}}_{N+1}' = \left\{ \overline{\mathcal{V}}_{N+1},\overline{\mathcal{E}}_{N+1} \right\} $ is designed based on the geometric topology graph of the spanning tree $ \mathcal{G}_{N+1}' = \left\{ \mathcal{V}_{N+1},\mathcal{E}_{N+1}' \right\} $. Because the geometric topology represents the feasibility and optimality of an arrival plan, the communication topology represents the communication realization of the arrival plan. \cite{xu2018distributed} does not specify the communication topology. However, in this paper, we use PLF to simplify the communication topology and demonstrate the effectiveness of the algorithm. Topologies with greater complexity will be addressed in future research.
	\end{remark}
	
%	The control input of CAV $ i $ is calculated according to the communication topology. The car-following distance in the virtual platooning is defined as in~\eqref{equ:CarFollowingDistance}. Meanwhile, to align the CAVs at the same depth to arrive at the stopping line, \eqref{equ:VelocityinSameDepth} is indirectly guaranteed by the virtual leading vehicle control feedback. Hence, the virtual platoon controller is designed as follows.
	
	\subsubsection{Controller Design}
	{The control input of CAV $ i $ is calculated according to the communication topology. The car-following distance in the virtual platooning is defined as in~\eqref{equ:CarFollowingDistance}. Hence, the virtual platoon controller is designed as follows.}
	
	Firstly, a union set $ \mathbb{I}_{i} $ is defined to describe the information exchange of CAV $ i $, as follows:
	\begin{equation}
		\mathbb{I}_{i}=\left\{j \mid a_{i j}=1\right\} \cup \left\{0 \mid q_{i i}=1\right\}.
	\end{equation}

	{
	The distance and velocity errors are defined as
	\begin{equation}
		\begin{array}{l}
			\delta_{\mathrm{p}}^{(i,j)} = p_{j}(t)-p_{i}(t)-D_\mathrm{des}\left(d_{j}-d_{i}\right) \\
			\delta_{\mathrm{v}}^{(i,j)} = v_{i}(t)-v_{j}(t)
		\end{array}, \, j \in \mathbb{I}_{i},
	\end{equation}
	where $ \delta_{\mathrm{p}}^{(i,j)} $ is the car-following distance error of CAV $ i $, and $ \delta_{\mathrm{v}}^{(i,j)} $ is the car-following velocity error considering all the CAVs in $ \mathbb{I}_{i} $.
	
	 Note that the arrival plans are described by the virtual platoon. Therefore, typical platoon controllers~\cite{xu2018distributed, zheng2015stabilitymargin, zheng2015stabilityandscalability} can be used in controller design. Since our research focuses on the scheduling problem rather than the control problem, a linear feedback controller is designed as follows:
	\begin{equation}
		\label{equ:Feedback_1}
		\begin{aligned}
			u_{i} 
			=&-\sum_{j \in \mathbb{I}_{i}} k_{p} \delta_{\mathrm{p}}^{(i,j)}-\sum_{j \in \mathbb{I}_{i}} k_{v}\delta_{\mathrm{v}}^{(i,j)} \\
			=&-k_{p} \sum_{j \in \mathbb{I}_{i}} a_{ij}\left(p_{j}(t)-p_{i}(t)-D_\mathrm{des}\left(d_{j}-d_{i}\right)\right) \\
			&-k_{p}q_{ii}\left(p_{0}(t)-p_{i}(t)-D_\mathrm{des}\left(d_{0}-d_{i}\right)\right) \\
			&-k_{v} \sum_{j \in \mathbb{I}_{i}}a_{ij}\left(v_{i}-v_{j}\right)-k_{v}q_{ii}\left(v_{i}-v_\mathrm{p}\right),
		\end{aligned}
	\end{equation}
	where $ k_{p} $ and $ k_{v} $ are the feedback gains of the distance and velocity errors of CAV $ i $. Because we consider a homogeneous scenario, the same gains are set for all the CAVs.
	}
	
	As mentioned before, we consider a second-order vehicle dynamics as shown in~\eqref{equ:StateSpaceEquation}. We define the car following errors as the new vehicle state.
	\begin{equation}
		\bar{\boldsymbol{x}}_{i}=\left[
		\begin{array}{c}
			\bar{x}_{i, 1} \\
			\bar{x}_{i, 2} \\
		\end{array}\right]=\left[
		\begin{array}{c}
			p_{0}-p_{i}-D_\mathrm{des}\left(d_{0}-d_{i}\right) \\
			v_{i}-v_\mathrm{p} \\
		\end{array}\right], i \in \mathbb{N}^{+}.
	\end{equation}
	The vehicle input remains the same,~\ie, $ \bar{u}_{i} = u_{i} $.
	
	Therefore, the car-following vehicle dynamic model is
	\begin{equation}
		\dot{\bar{\boldsymbol{x}}}_{i}=\boldsymbol{A} \bar{\boldsymbol{x}}_{i}+\boldsymbol{B} \bar{u}_{i}, \, i \in \mathbb{N}^{+}.
	\end{equation}
	
	The linear feedback controller is simplified to
	\begin{equation}
		\bar{u}_{i}=-k_{p} \sum_{j}\left(l_{i j}+q_{i j}\right) \bar{x}_{j, 1}-k_{v} \sum_{j}\left(l_{i j}+q_{i j}\right) \bar{x}_{j, 2},\, j \in \mathbb{I}_{i}.
	\end{equation}
	
	Defining {$ \boldsymbol{k} = \left[k_{p},k_{v}\right]^{T} $}, we have
	\begin{equation}
		\bar{u}_{i}=-\sum_{j} \left(l_{ij}+q_{ij}\right) \boldsymbol{k}^{T}\bar{\boldsymbol{x}}_{i}, \, i \in \mathbb{N}^{+}.
	\end{equation}

\section{Simulation}
\label{Sec:Simulation}
\subsection{Simulation Environment and Performance Index}
\begin{table*}[t]
	\centering
	\caption{Key Parameters.}
	\label{tab:Parameters}
	\begin{tabular}[c]{clcl}
		\toprule%[1.5pt]
		{\textbf{Types}} & {\textbf{Parameter}} &  {\textbf{Symbol}} & {\textbf{Value}} \\
		\midrule%[1pt]
		\multirow{3}{0.2\textwidth}{\centering\textbf{Simulation Parameters}}  & Simulation step & - &  0.1 s \\
		& Initial vehicle speed & - & {15 m/s} \\
		& Control zone length & $ L_{\mathrm{ctrl}} $ & 900 m \\
		\cline{0-3}
		\multirow{8}{0.2\textwidth}{\centering\textbf{Controller Parameters}}
		& Feedback gain of distance error & $ k_{p} $ & 0.1\\
		& Feedback gain of velocity error & {$ k_{v} $} & 0.3\\
		& Desired velocity of the virtual leading vehicle & $ v_\mathrm{p} $ & 10 m/s\\
		& Desired car-following distance of the virtual platoon & $ D_\mathrm{des} $ & 30 m\\
		& Maximum acceleration & $ u_{\max} $ & 5 m/s$^{2} $ \\
		& Minimum acceleration & $ u_{\min} $ & -6 m/s$^{2} $ \\
		& Maximum velocity  & $ v_{\max} $ & {15 m/s} \\
		& Minimum velocity & $ v_{\min} $ & 0 m/s  \\
		\bottomrule%[1.5pt]
	\end{tabular}
\end{table*}

The traffic simulation was conducted in SUMO, which is widely used in traffic researches\cite{lopez2018microscopic}. The simulation was run on an Intel Core i7-12700F 2.1 GHz processor.{ The intersection scenario and lane direction settings are the same as shown in Fig.~\ref{fig:Intersection}, and the control zone length $ L_{\mathrm{ctrl}} = 900 m $. Note that CAVs should converge to their target layer when arriving at the stopping line. Therefore, adequate control zone length is provided to enable CAVs to form an equilibrium virtual platoon state. It is an interesting future research topic to analyze the shortest control zone length needed in algorithm deployment.} The CAV arrival is assumed to be a Poisson distributed flow, given by
\begin{equation}
	\label{equ:Poisson}
	P(X=k)=\frac{\lambda^{k}}{k!} e^{-\lambda}, k=0,1,\cdots,
\end{equation}
where $ X $ represents the vehicle's arrival at the control zone. $ \lambda $ is the expected value as well as the variance of the Poisson distribution.

%Note that CAVs drive through the intersection with desired virtual platoon speed $ v_\mathrm{p} $, while the maximum speed $ v_{\max} $ can only be reached in CAVs' intersection approaching behavior.
%We used the evacuation time and ATTD to evaluate the traffic efficiency.
{
Traffic efficiency is the main consideration in the proposed scheduling algorithms. To better show the improvement of the proposed algorithms, we further introduce fuel consumption as a performance indicator to evaluate the fuel economy. For the overall traffic efficiency, the evacuation time is as defined in Definition~\ref{def:EvacuationTime}. In terms of individual benefits, the ATTD is considered as defined in Definition~\ref{def:ATTD}. As for fuel consumption, the HBEFA3 model is used to evaluate the fuel consumption of the CAVs, which is integrated with SUMO~\cite{keller2010handbook}. Other key simulation parameters are shown in Table~\ref{tab:Parameters}. 
}

{
\begin{remark}
	For safety concerns, $ D_\mathrm{des} $ is designed relatively conservatively to avoid vehicles in different layers from collision, and vehicles are assumed to maintain desired virtual platoon speed $ v_\mathrm{p} $ in the intersection conflict zone. Therefore, maximum speed $ v_{\max} $ can only be reached in CAVs' intersection-approaching behavior. These speed settings accord with the local traffic regulations. Furthermore, it would be an interesting future direction to consider vehicle speed trajectory planning in the conflict zone and robust controller, which helps to further increase traffic efficiency and platoon performance.
\end{remark}
}

\subsection{Case Study of Algorithm Comparison}
\label{Sec:CaseStudy}
\begin{figure*}[!t]
	\centering
	\subfigure[Conflict Directed Graph\label{fig:AlgorithmCompare_CDG}]
	{\includegraphics[width=0.32\linewidth]{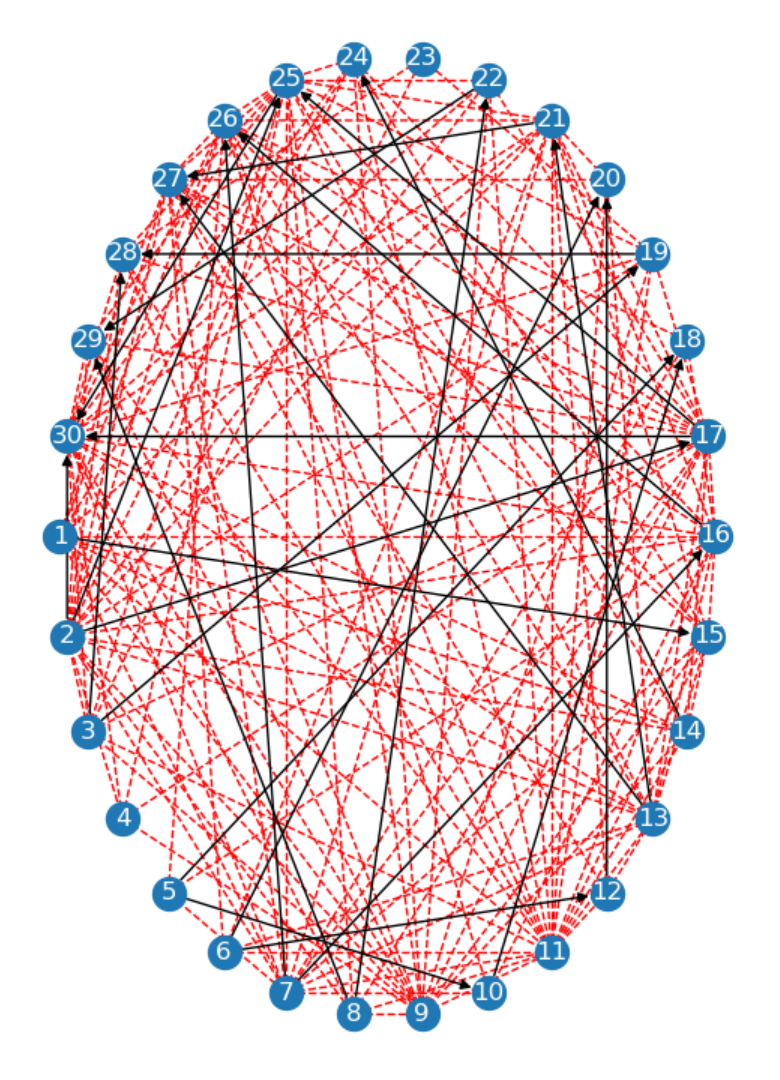}}
	\subfigure[Coexisting Undirected Graph\label{fig:AlgorithmCompare_CUG}]
	{\includegraphics[width=0.32\linewidth]{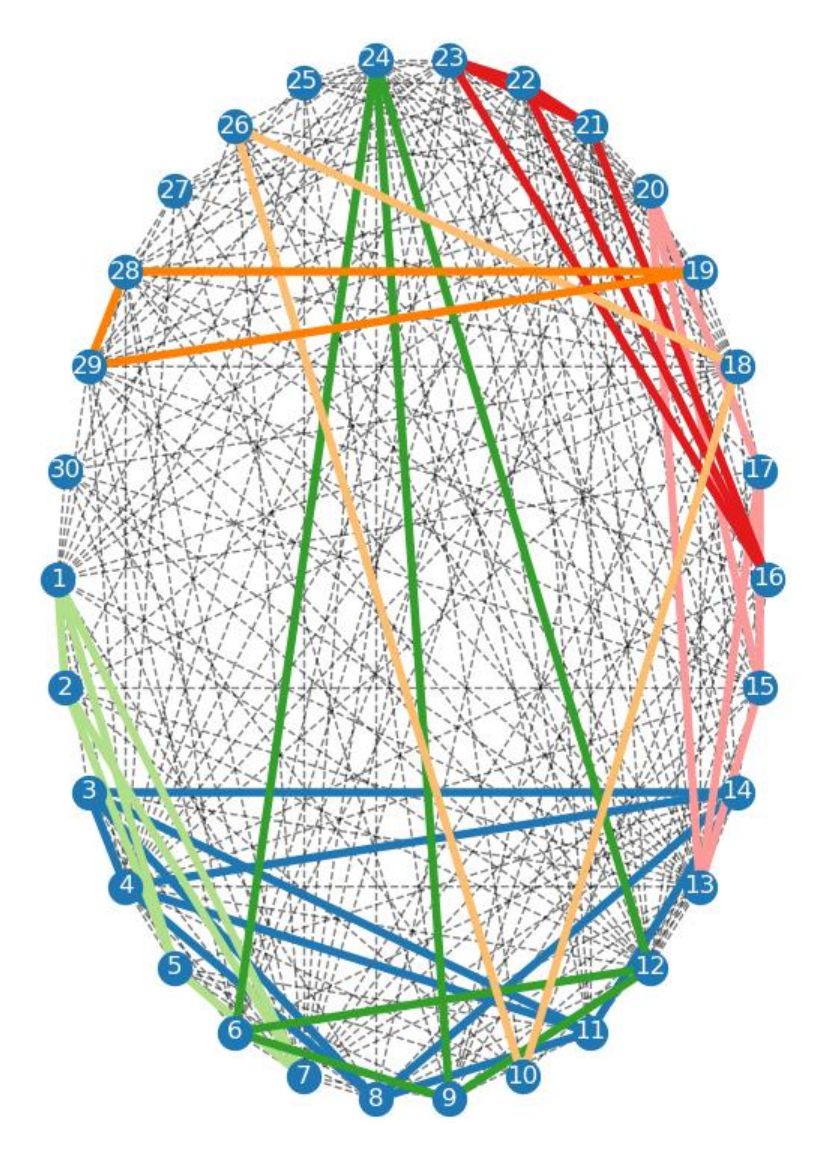}}
	
	\subfigure[DFST Method Spanning Tree\label{fig:AlgorithmCompare_DFST}]
	{\includegraphics[width=0.32\linewidth]{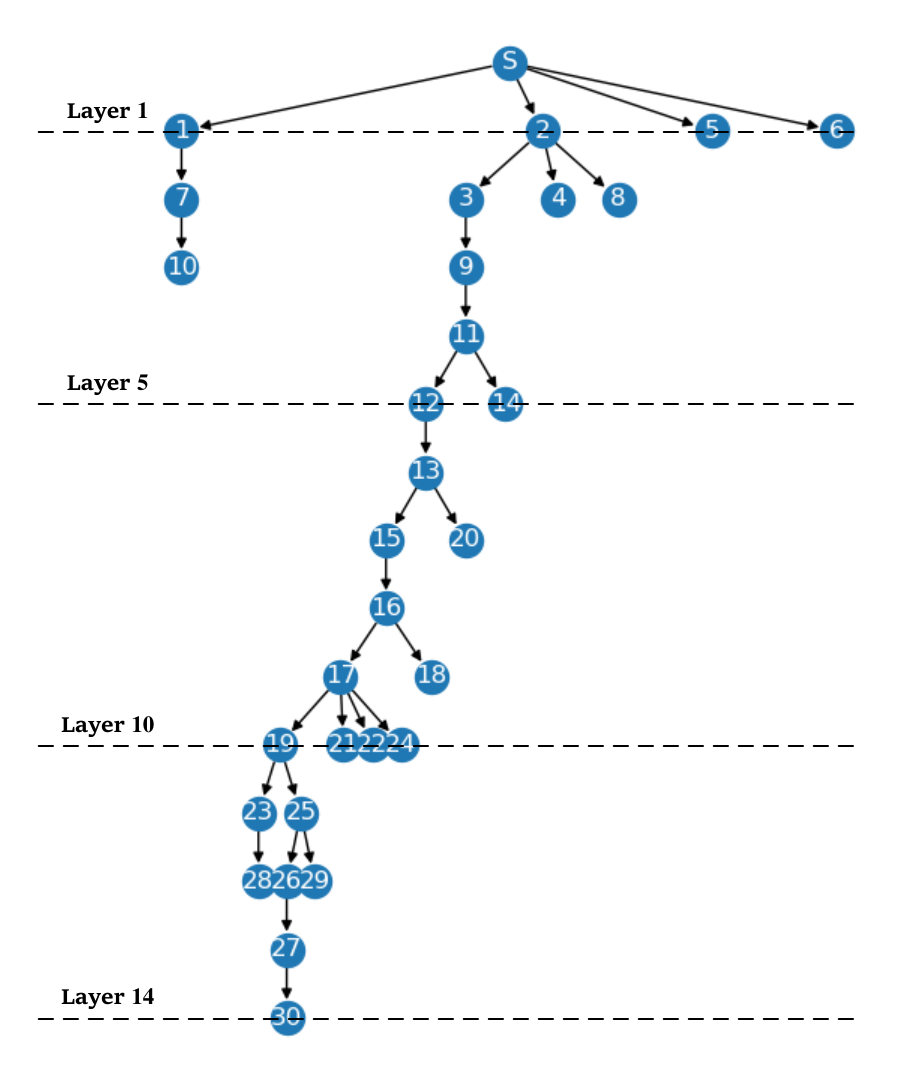}}
	\subfigure[iDFST Method Spanning Tree\label{fig:AlgorithmCompare_iDFST}]
	{\includegraphics[width=0.32\linewidth]{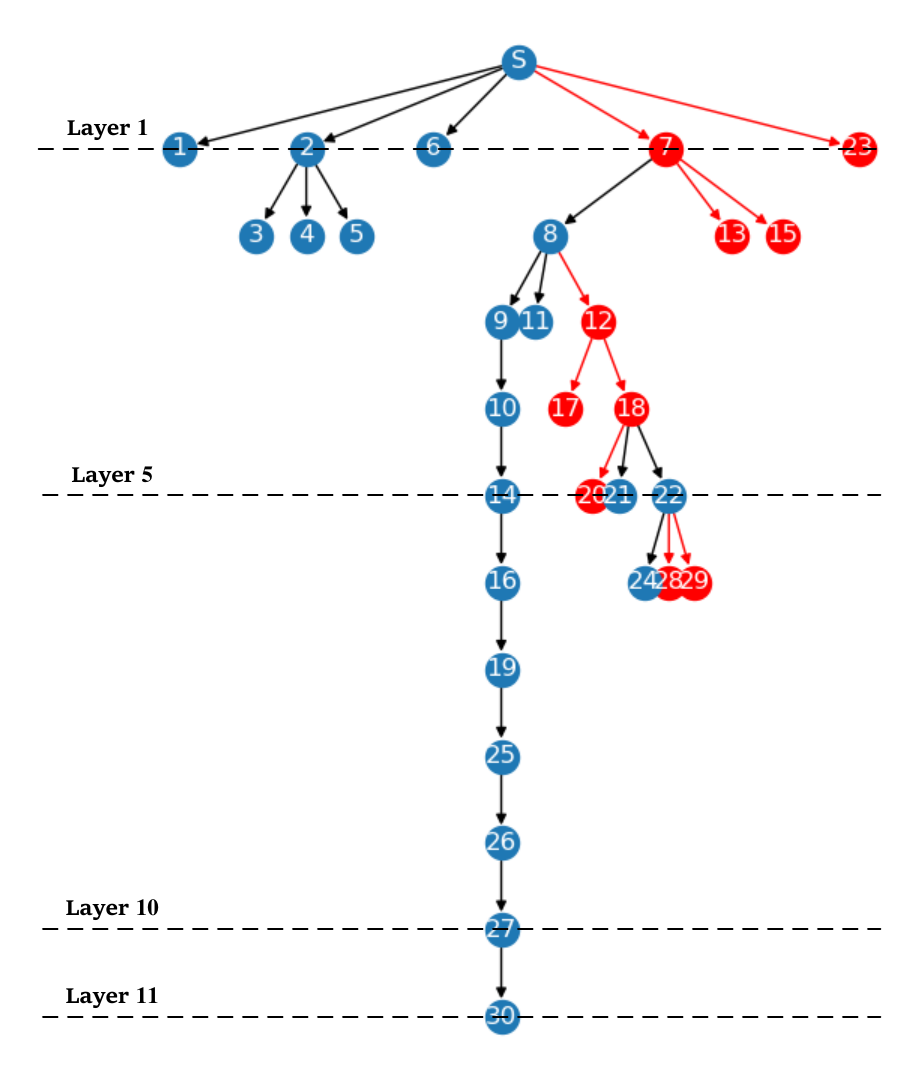}}
	\subfigure[MCC Method Spanning Tree\label{fig:AlgorithmCompare_MCC}]
	{\includegraphics[width=0.32\linewidth]{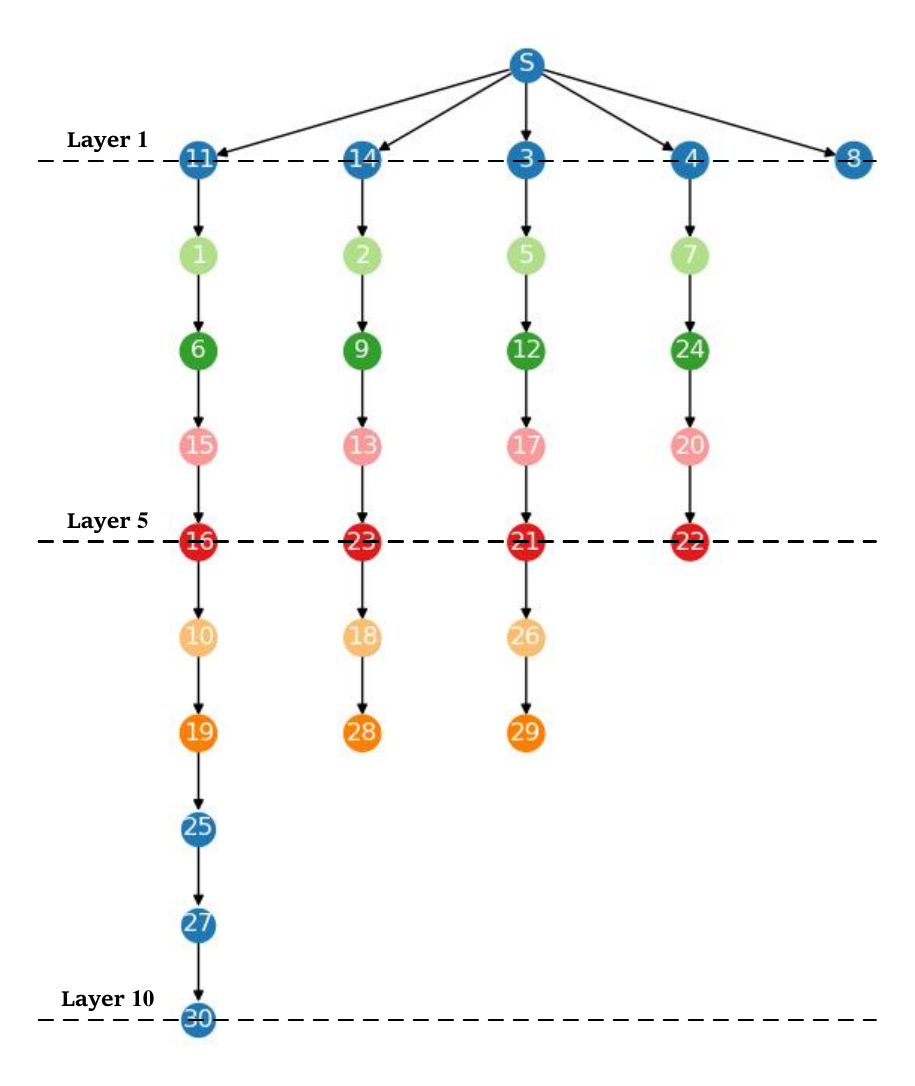}}
	\caption{A case study of $ 30 $ CAVs. CDG and CUG are presented in Fig.~\ref{fig:AlgorithmCompare_CDG} and Fig.~\ref{fig:AlgorithmCompare_CUG}, respectively, to describe the conflict and coexistence relationship of the CAVs. DFST arranges the CAVs as shown in Fig~.\ref{fig:AlgorithmCompare_DFST} with the maximum layer of $ d_\mathrm{all} = 14 $. iDFST, which is an improvement of DFST,  utilizes some nodes (red ones) to optimize the overall depth, as shown in Fig.~\ref{fig:AlgorithmCompare_iDFST}, resulting in $ d_\mathrm{all} = 11 $. The MCC method finds the minimum group of CAVs that can pass through the intersection simultaneously (nodes of the same color), resulting in $ d_\mathrm{all} = 10 $, as shown in Fig.~\ref{fig:AlgorithmCompare_MCC}. 
	\label{fig:AlgorithmCompare}}
\end{figure*}

We demonstrate a case study in Example~\ref{exp:2} to show the effectiveness of the proposed algorithms. The benchmark algorithm is the DFST method proposed in ~\cite{xu2018distributed}. Based on DFST, the first algorithm is the iDFST presented in Section~\ref{Sec:iDFST}. The MCC algorithm has been explained in Section~\ref{Sec:MCC}, which applies a heuristic algorithm to solve the MCC problem.

\begin{example}[A case study of $ 30 $ CAVs]
	\label{exp:2}
	The traffic scenario remains the same as shown in Fig~\ref{fig:Traffic_Scenario}. A total number of $ 30 $ vehicles arrive at the intersection with $ \lambda = 3 $ Poisson distribution. The vehicle arrivals are generated randomly and remain identical among the three algorithms.
\end{example}

As mentioned before, CDG and CUG are employed to describe the conflict relationship of the vehicles. The CDG is plotted in Fig.~\ref{fig:AlgorithmCompare_CDG} as explained in Definition~\ref{def:CDG}. The red dashed edges are the crossing conflicts and converging conflicts, whereas the black solid edges are the diverging and reachability conflicts. Fig.~\ref{fig:AlgorithmCompare_CUG} depicts the CUG defined in Definition~\ref{def:CUG}, which is the complement graph of the CDG. The edges are the coexisting relationships,~\ie, the conflict-free vehicles.

Although identical vehicle arrivals are set for the three algorithms, different spanning trees,~\ie, passing order solutions, are obtained. The spanning trees are carried out by scheduling the vehicles in the same layer to pass the intersection simultaneously. Thus, the overall depth $ d_\mathrm{all} $ of the spanning tree represents the evacuation time of the solution. First, the spanning trees of the DFST and iDFST methods are shown in Fig~\ref{fig:AlgorithmCompare_DFST} and Fig.~\ref{fig:AlgorithmCompare_iDFST}, which are derived from Fig.~\ref{fig:AlgorithmCompare_CDG}. Both the DFST and iDFST methods are based on the FIFO principle,~\ie, obtaining the spanning tree vehicle by vehicle. Because the DFST has limited optimization of vehicle scheduling, it is a feasible solution with $ d_\mathrm{all} = 14 $. In contrast, iDFST separates the conflict types to find the optimal place for each vehicle and obtains a locally optimal solution. Hence, the overall depth of the iDFST method is $ d_\mathrm{all} = 11 $. Note that the red nodes are the vehicles that are brought forward when compared with the DFST method. Fig~\ref{fig:AlgorithmCompare_MCC} depicts the solution derived from Fig.~\ref{fig:AlgorithmCompare_CUG} by using the MCC method. The nodes of the same color represent the vehicles in the same clique,~\ie, the vehicles that are scheduled to pass the intersection at the same time. The same cliques and the corresponding colors are drawn in Fig.~\ref{fig:AlgorithmCompare_CUG} to show the MCC result with $ d_\mathrm{all} = 10 $. {Moreover, it is observed that MCC generates a spanning tree much \textit{balanced} than iDFST. In the following simulations, we found that even if evacuation time and ATTD are not of much difference in iDFST and MCC, the distribution of the vehicle nodes influences the fuel consumption performance significantly.}

%Moreover, because the cliques can be arranged in the descending order, the average travel time of the vehicles in the MCC method is also better than that in the former algorithms. A detailed discussion is presented in the following simulations.

{
\begin{remark}
	Note that these algorithms run in a dynamic process in real-scenario implementations. When CAVs enter the control zone or reach the stopping line, CDG, CUG, and spanning trees in Fig.~\ref{fig:AlgorithmCompare} change dynamically. It means that Fig.~\ref{fig:AlgorithmCompare} is the intermediate state where the spanning trees grow into maximum depth $ d_\mathrm{all} $. For DFST/iDFST, CDG and spanning tree are calculated vehicle by vehicle. Therefore, as the CAVs gradually approach from $ 1 $ to $ N $, CDG and spanning tree gradually grow to Fig.~\ref{fig:AlgorithmCompare}. In other words, the arrival plan of $ k $ vehicles is calculated based on the arrival plan of $ k - 1 $ vehicles. For MCC, since the spanning tree is calculated in global optimal, the spanning tree of $ k $ vehicles is not directly related to the spanning tree of $ k-1 $ vehicles. In other words, the arrival plan is re-calculated as the new CAV comes. To further clarify the dynamic process, we upload the complete simulation GIF which is available at \url{https://github.com/CeroChen/GraphBashedCoordination}.
\end{remark}
}

\subsection{Algorithm Validation with Small Number of Vehicles}
%	Thus the second simulation is conducted to verify the optimality of the following proposed algorithms.
\begin{figure*}[!t]
	\centering
	\subfigure[Maximum depth of the spanning tree generated by different algorithms\label{fig:NineVehicles:Depth}]
	{\includegraphics[width=0.95\linewidth]{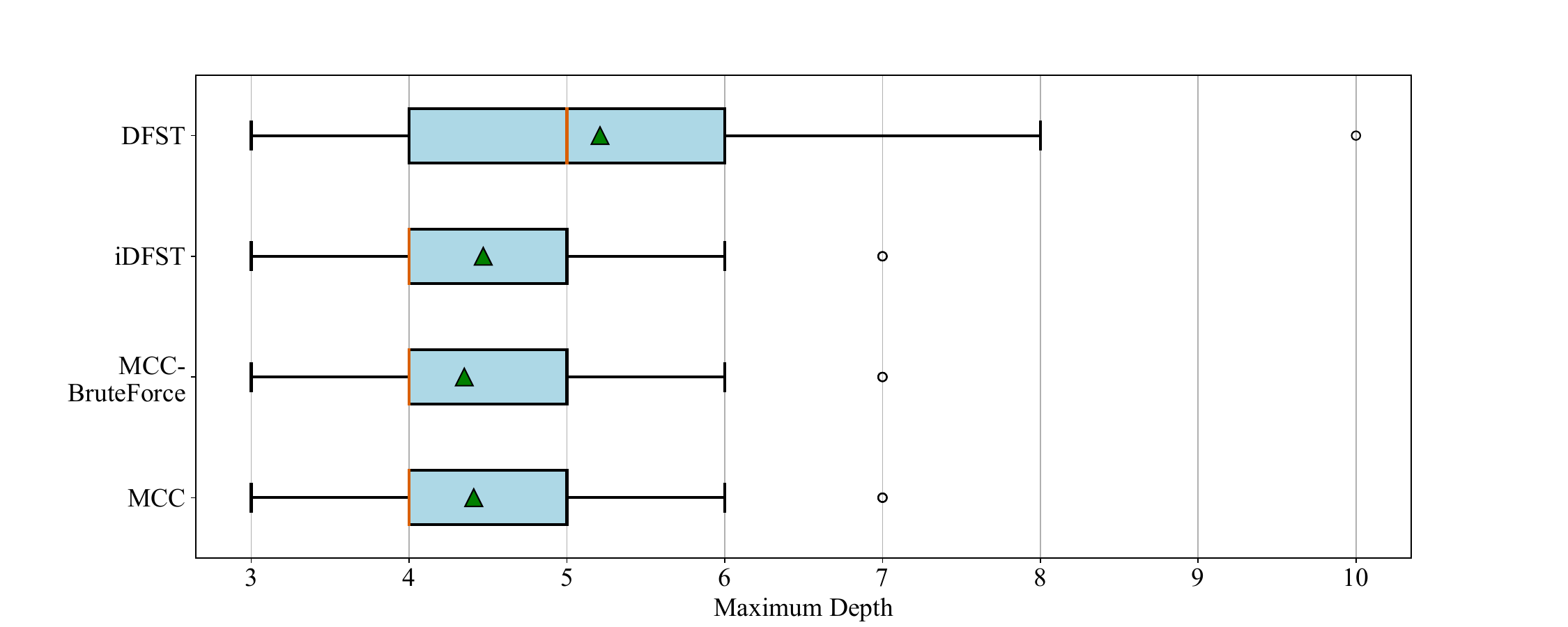}}
	
	\subfigure[Computational time of different algorithms\label{fig:NineVehicles:Time}]
	{\includegraphics[width=0.95\linewidth]{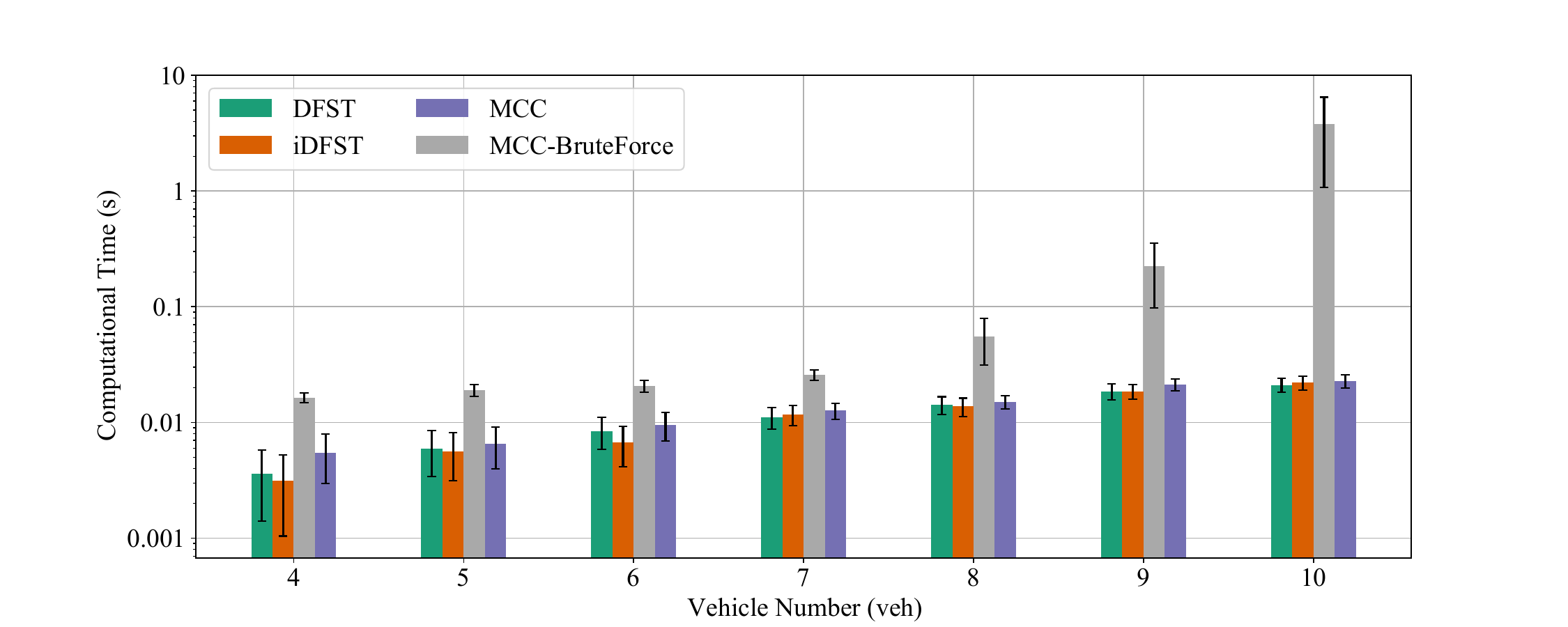}}
	\caption{The maximum depth of the spanning tree obtained by different algorithms is shown in Fig.~\ref{fig:NineVehicles:Depth}. Simulation is conducted $ 100 $ times for each algorithm and $ 10 $ vehicles are generated randomly in each simulation. The orange lines are the median values of the results and the green triangles are the average values. The corresponding computational time is shown in Fig.~\ref{fig:NineVehicles:Time}}
	\label{fig:NineVehicles}
\end{figure*}
Next, we focus on the optimality of the three algorithms in general cases. In this section, we introduce the MCC-BruteForce algorithm. It enumerates all the possible combinations of the cliques,~\ie, solves the MCC problem by using the brute force method. Although MCC-BruteForce provides us the theoretical global optimal solution, the huge calculation burden makes it impossible to apply it to a large number of vehicles. {In our simulation, MCC-BruteForce computation of $ 11 $ vehicles requires more than 32GB RAM, which exceeds the maximum RAM of our computer. Thus, $ 10 $ is set as the number of input vehicles.}

Because the vehicle scheduling problem is strongly related to the vehicle arrival time, conducting a small number of simulations is insufficient. Hence, $ 100 $ repetitions of the simulation were conducted for each algorithm. It is worth mentioning that these $ 100 $ instances of vehicle arrival input are randomly generated by Poisson distribution, as shown in~\eqref{equ:Poisson}, and they are identical across all four algorithms.

We use the box plot to illustrate the maximum depth of the spanning tree,~\ie, the overall traffic efficiency, as shown in Fig.~\ref{fig:NineVehicles:Depth}. The red lines are the median values of the corresponding simulation results, whereas the green triangles represent the average values. {Compared with the DFST method, both the iDFST and MCC algorithms can reduce the maximum depth. The first quartile value, median value, and third quartile value are $ 4,4,5 $, respectively. However, the average values of the results are not the same. The iDFST method attained an average value of $ 4.47 $, whereas the average value in the MCC method was $ 4.41 $ and that of MCC-BruteForce was $ 4.35 $. Recall that the vehicle arrivals were identical for these algorithms; therefore, it can be inferred that the MCC-BruteForce method obtains the global optimal solution. Because of the heuristic MCC method introduced in Section~\ref{Sec:MCC}, the MCC method cannot obtain the global optimal solution. Nevertheless, it achieves a better result than the iDFST method, which obtains the locally optimal solution. The computational time comparison is shown in Fig.~\ref{fig:NineVehicles:Time}. As vehicle number increases, the computational time of MCC-BruteForce method increases exponentially. For $ 10 $ CAVs, computational time exceeds $ 1 s$, which does not meet the real scenario deployment requirement. The standard deviation of MCC-BruteForce method also increases with vehicle number. It implies that the computational time of MCC-BruteForce method is strongly-related to vehicle distribution. By contrast, the computational time of the other three methods remains below $ 0.03 s$ and has a low standard deviation. The computational time of more vehicles is discussed in Section~\ref{Sec:SimulationResults:InputVehicles}.}

%	\begin{table}[!t]
	%		% increase table row spacing, adjust to taste
	%		\renewcommand{\arraystretch}{1.3}
	%		% if using array.sty, it might be a good idea to tweak the value of
	%		\centering
	%		\begin{tabular}{c|c|c|c|c}
		%			\hline
		%			\diagbox[width=14em]{\textbf{Maximum Depth}}{\textbf{Algorithm}} & DFST & iDFST  & MCC-BruteForce & MCC \\
		%			\hline
		%			Mean Value & $ 4.76 $ & $ 4.08 $ & $ 4.015 $ & $ 4.035 $ \\
		%			Median Value & $ 5 $ & $ 4 $ & $ 4 $ & $ 4 $\\
		%			\hline
		%		\end{tabular}
	%		\caption{The mean value and median value of different algorithm results in Fig~\ref{fig:NineVehicles}.}
	%		\label{tab:MaximumDepth}
	%	\end{table}

%We also convert the value to fuel consumption per $ 100 $ kilometers to evaluate fuel consumption

\subsection{Simulation Results for Various Numbers of Input Vehicles}
\label{Sec:SimulationResults:InputVehicles}
\begin{figure*}[!p]
	\centering
	\subfigure[Evacuation time\label{fig:VehicleNumber_Duration_New}]
	{\includegraphics[width=0.93\linewidth]{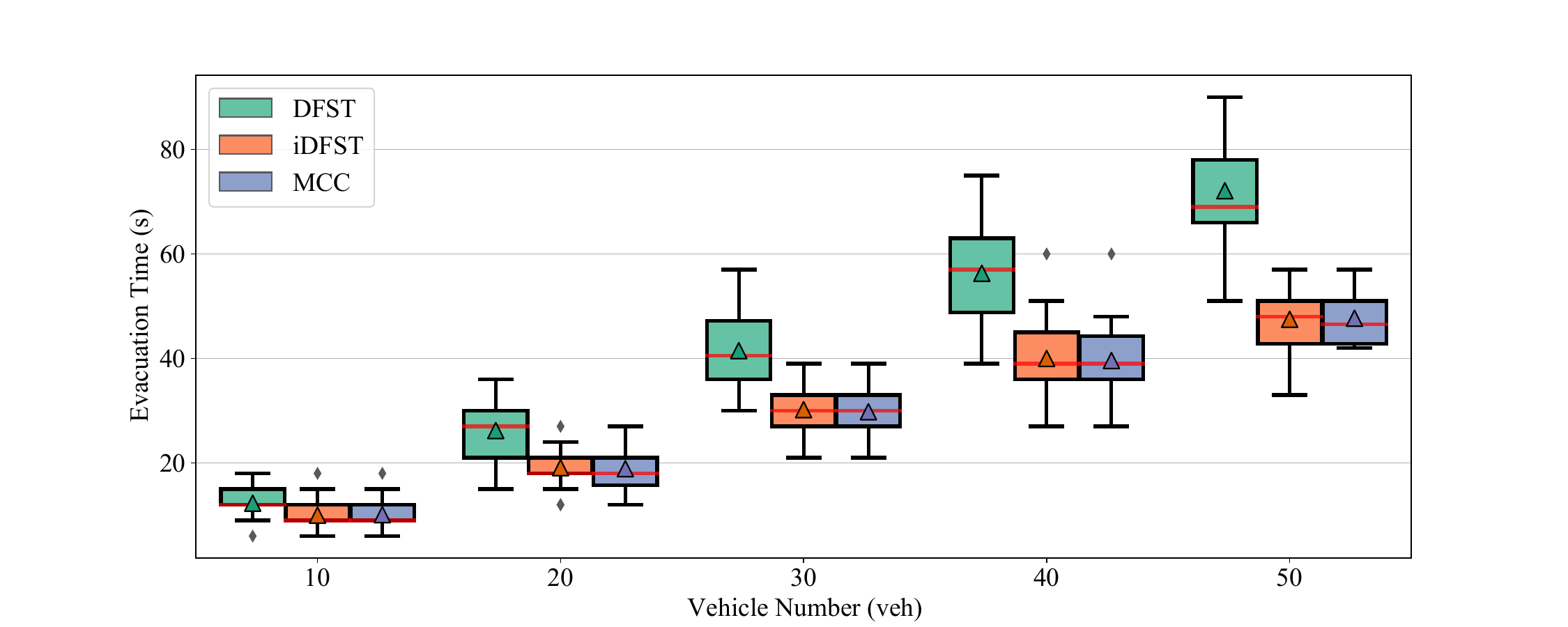}}
	\subfigure[Average travel time delay\label{fig:VehicleNumber_MeanTravelTime}]
	{\includegraphics[width=0.93\linewidth]{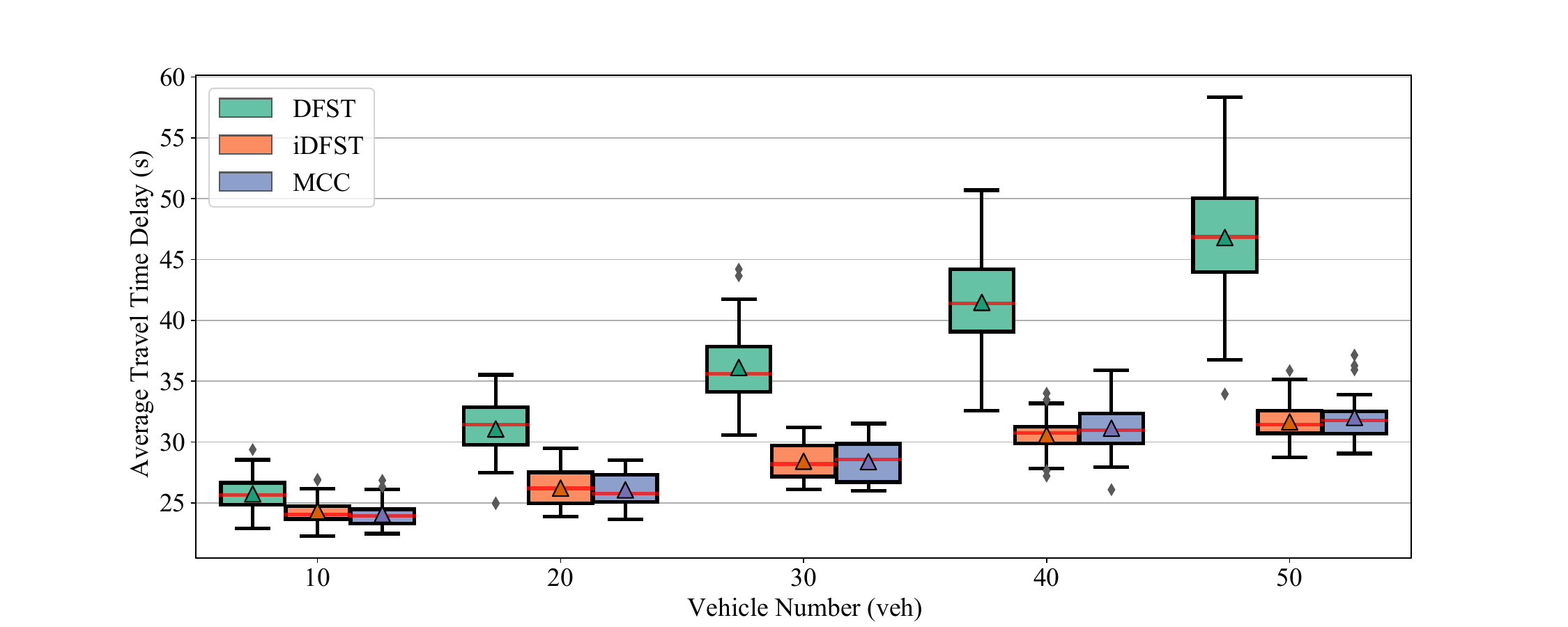}}
	\subfigure[Fuel consumption\label{fig:VehicleNumber_EnergyConsumption}]
	{\includegraphics[width=0.93\linewidth]{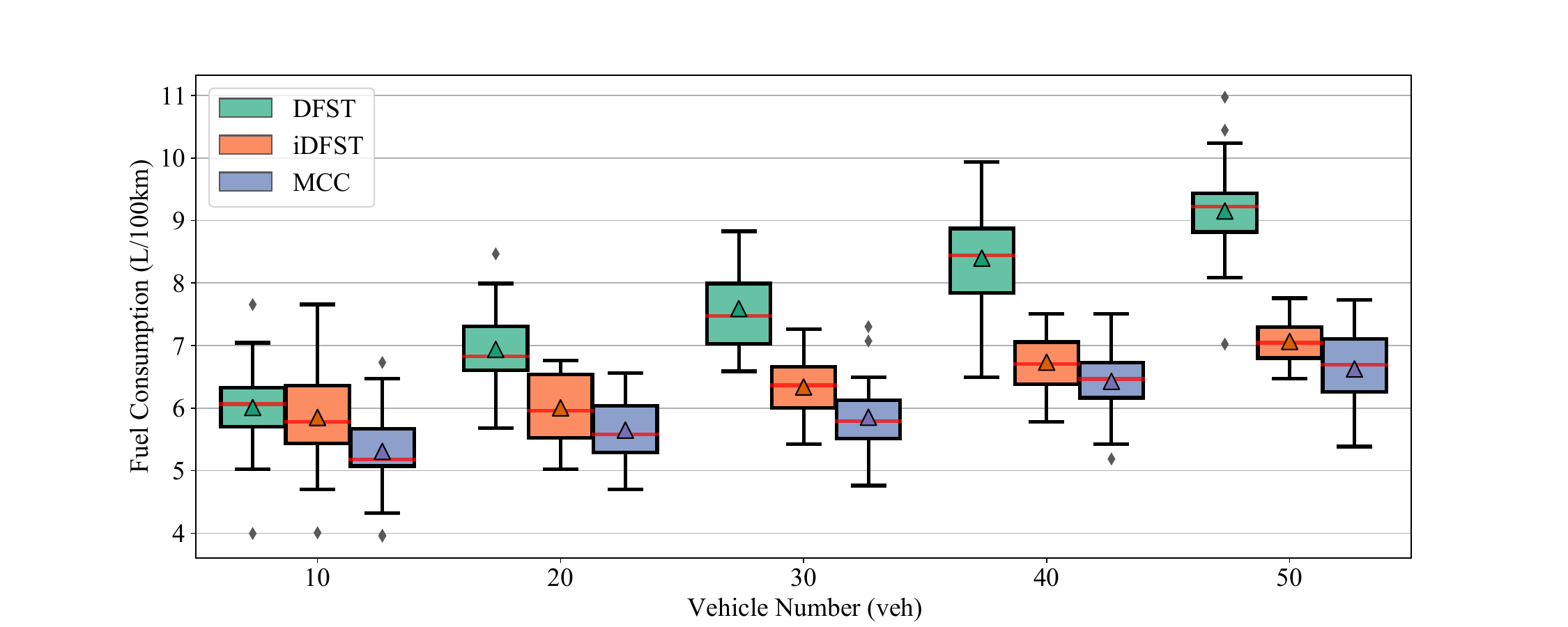}}
	\caption{Comparison of traffic efficiency and fuel consumption of three algorithms for various numbers of input vehicles. For each algorithm, ten repetitions of the simulation were conducted for each number of input vehicles. The red lines are the median values and the triangles are the average values. As the number of vehicles increases, MCC shows better performance than the other two algorithms in improving the traffic efficiency and reducing the fuel consumption.}
	\label{fig:VariousVehicleInput}
\end{figure*}

For conducting the simulation on a larger scale, we extended the number of input vehicles from $ 10 $ to $ 50 $ and further compared the algorithms. $ \lambda=3 $ was set as the input for the Poisson distribution of the vehicle arrival. As shown in Fig~\ref{fig:VariousVehicleInput}, the red lines in the box plot are the median value and the triangles are the average values. Ten repetitions of the simulation were conducted for each number of vehicles, and the vehicle arrivals were identical across the three algorithms.

In terms of the overall traffic efficiency performance, both the iDFST and MCC methods reduce the evacuation time significantly when compared with DFST. As shown in Fig~\ref{fig:VehicleNumber_Duration_New}, the average value of the evacuation time in DFST for the number of input vehicles of $ 50 $ is $ 69.9 \, s $, whereas those in iDFST and MCC are $ 46.2 \, s $ and $ 46.05 \, s $, respectively. The two proposed algorithms save approximately $ 33\% $ of the evacuation time because of the improved scheduling method. Similar results are observed in the ATTD in Fig.~\ref{fig:VehicleNumber_MeanTravelTime}, where iDFST and MCC save approximately $ 18 \% $ of the ATTD for $ 50 $ input vehicles. Although the MCC method prioritizes the overall efficiency,~\ie, the evacuation time in the scheduling, the re-ordering of the cliques minimizes the ATTD (line~\ref{algo:MCC:Spanning} in the MCC algorithm). In other words, the MCC method heuristically finds the MCC groups to minimize the evacuation time. Then, it obtains a spanning tree from the MCC groups to minimize the ATTD. {In MCC, the vehicle is scheduled to its near-globally optimal position which is not necessary to be the best position for itself (layer as smallest as possible). Therefore, more vehicles in MCC run in smooth speed trajectories instead of a harsh acceleration/deceleration in driving into their own best position in iDFST. As shown in Fig.~\ref{fig:VehicleNumber_EnergyConsumption}, the better vehicle distribution helps to avoid the deceleration and idling of the CAVs in the control zone. iDFST and MCC saves $ 20 \% $ and $ 27 \% $ of the fuel consumption, respectively, for $ 50 $ input vehicles.}

%This also helps to interpret the outstanding fuel consumption performance of the MCC,

{
In terms of computational time, the calculation time of different vehicle numbers is summed up in Table~\ref{tab:ComputationalTime}. In the comparison of the three algorithms, the computational time of MCC is slightly larger (1\%-2\%) than the other two algorithms. We infer that this is because of the constant time graph-based procedures in MCC algorithm. We have interpreted in Section~\ref{Sec:iDFST} and~\ref{Sec:MCC} that all of these algorithms have computational complexity of $ O(N) $. The simulation results validate this conclusion, where computational time approximately linearly increases with vehicle number.
}
\begin{table}[!t]
	\caption{Average value of computational time (seconds) comparison.}
	\label{tab:ComputationalTime}
	\renewcommand{\arraystretch}{1.3}
	\centering
	\begin{tabular}{c|c|c|c|c|c}
		\hline
		\diagbox[width=10em]{\textbf{Algorithm}}{\textbf{Vehicle} \\ \textbf{Number}} & $ \mathbf{10} $ & $ \mathbf{20} $  & $ \mathbf{30} $ & $ \mathbf{40} $ & $ \mathbf{50} $ \\
		\hline
		DFST & $ 0.022 $ & $ 0.092 $ & $ 0.209 $ & $ 0.369 $ & $ 0.575 $\\
		iDFST & $ 0.023 $ & $ 0.094 $ & $ 0.211 $ & $ 0.377 $ & $ 0.578 $\\
		MCC & $ 0.025 $ & $ 0.097 $ & $ 0.217 $ & $ 0.386 $ & $ 0.590 $\\
		\hline
	\end{tabular}
\end{table}

\begin{table}[!t]
	\caption{Standard deviation of computational time (seconds) comparison.}
	\label{tab:ComputationalTime_std}
	\renewcommand{\arraystretch}{1.3}
	\centering
	\begin{tabular}{c|c|c|c|c|c}
		\hline
		\diagbox[width=10em]{\textbf{Algorithm}}{\textbf{Vehicle} \\ \textbf{Number}} & $ \mathbf{10} $ & $ \mathbf{20} $  & $ \mathbf{30} $ & $ \mathbf{40} $ & $ \mathbf{50} $ \\
		\hline
		DFST & $ 0.007 $ & $ 0.012 $ & $ 0.021 $ & $ 0.031 $ & $ 0.048 $\\
		iDFST & $ 0.008 $ & $ 0.012 $ & $ 0.019 $ & $ 0.031 $ & $ 0.048 $\\
		MCC & $ 0.008 $ & $ 0.012 $ & $ 0.018 $ & $ 0.027 $ & $ 0.045 $\\
		\hline
	\end{tabular}
\end{table}

\subsection{Simulation Results at Various Traffic Volumes}

\begin{figure*}[!p]
	\centering
	\subfigure[Evacuation time\label{fig:Lambda_Duration_New}]
	{\includegraphics[width=0.93\linewidth]{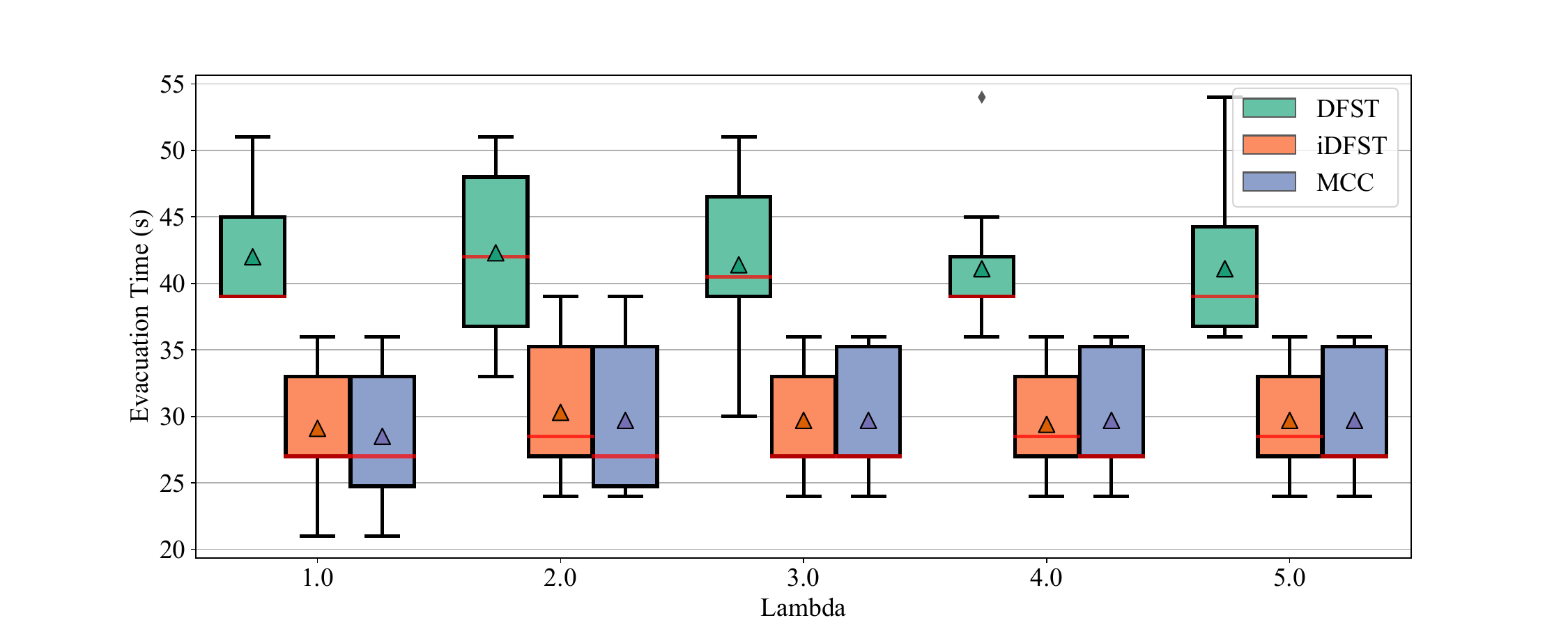}}
	\subfigure[Average travel time delay\label{fig:Lambda_MeanTravelTime}]
	{\includegraphics[width=0.93\linewidth]{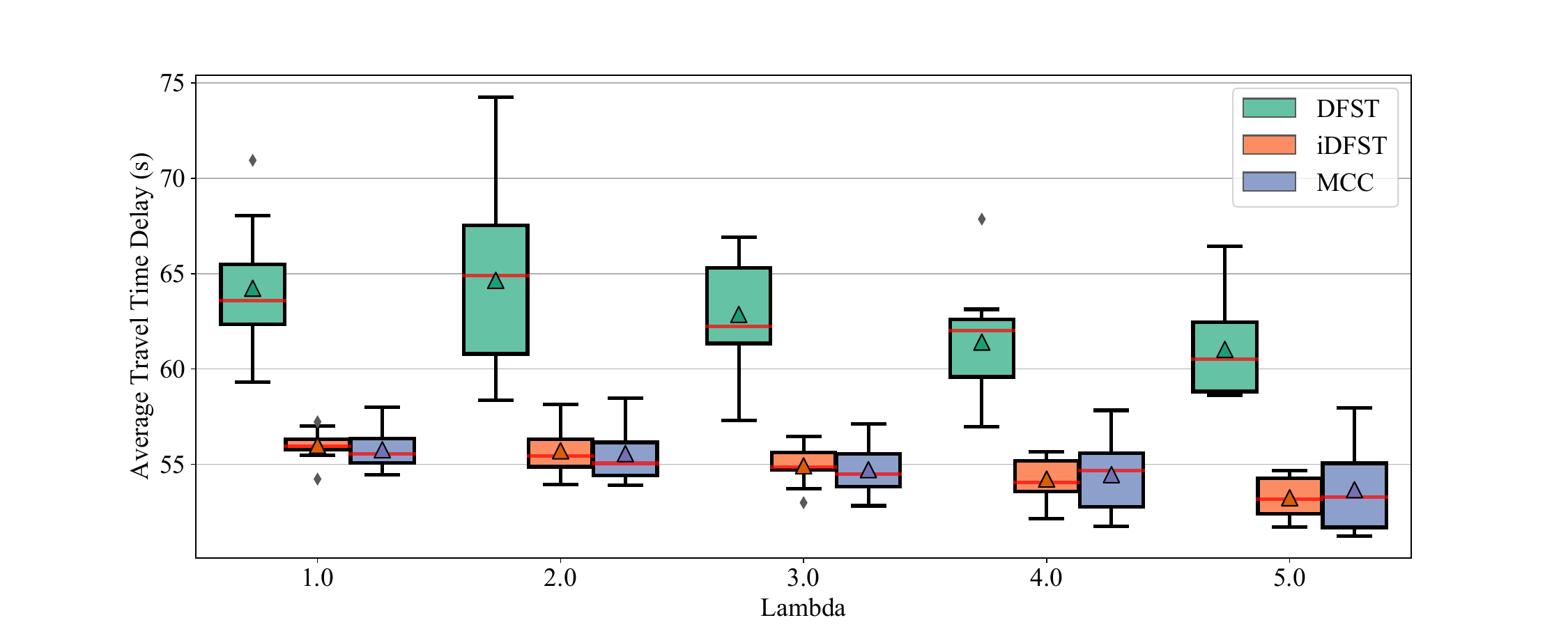}}
	\subfigure[Fuel consumption\label{fig:Lambda_EnergyConsumption}]
	{\includegraphics[width=0.93\linewidth]{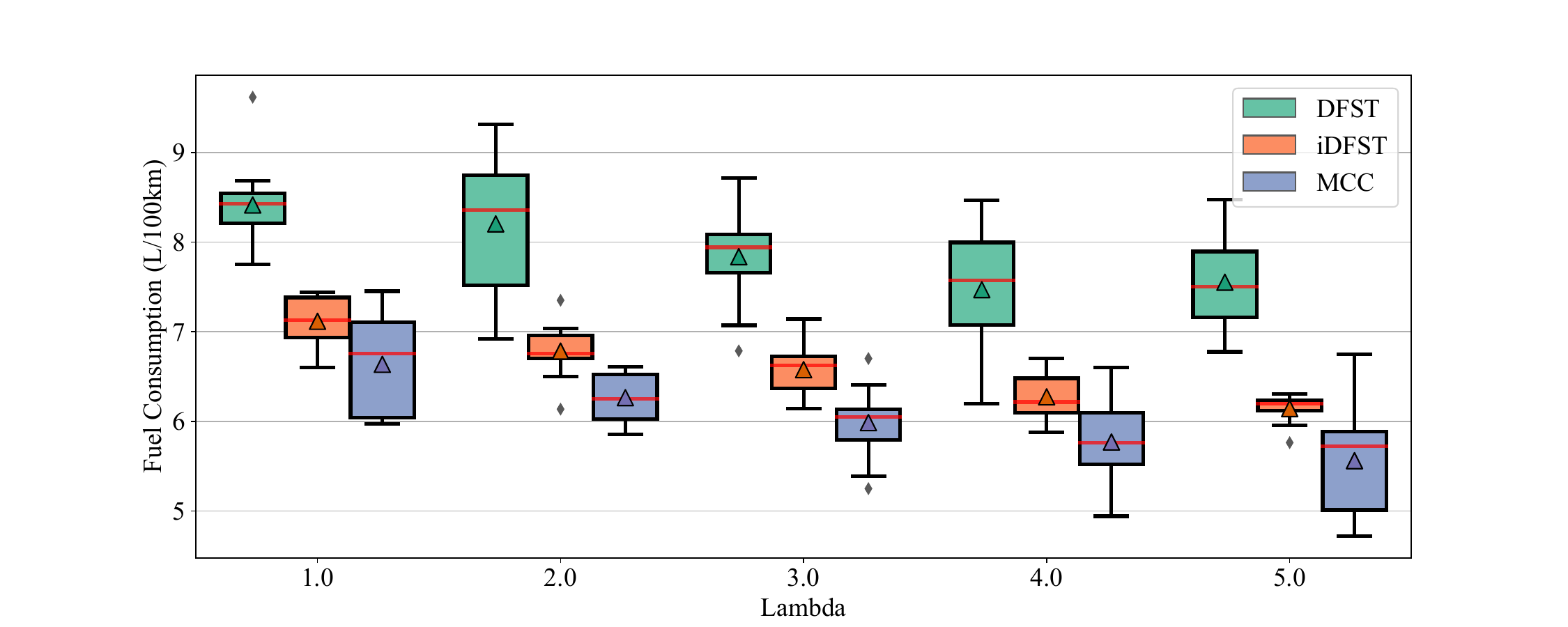}}
	\caption{Comparison of traffic efficiency and fuel consumption of three algorithms for various vehicle volumes. Ten repetitions of the simulation were conducted for each vehicle volume number. The red lines are the median values and the triangles are the average values. MCC shows better performance than the other two algorithms, especially when the intersection is rather crowded,~\ie, $ \lambda < 3 $.}
	\label{fig:VariousLambda}
\end{figure*}

The previous simulations were conducted with $ \lambda=3 $,~\ie, a constant Poisson distribution. In the last simulation, we examined the influence of the traffic volume on the algorithm performance. Because the vehicle arrival follows the Poisson distribution as shown in~\eqref{equ:Poisson}, we varied the $ \lambda $ value to change the traffic volumes on each lane. As before, $ 10 $ repetitions of the simulation were conducted for each number of vehicles, and the vehicle arrivals were identical across the three algorithms. The simulation results are plotted in Fig~\ref{fig:VariousLambda}.

$ \lambda $ is the expected value of the Poisson distribution, which represents the average gap in the vehicle arrival time on each lane. As shown in Fig.~\ref{fig:Lambda_Duration_New}, the evacuation times obtained with the three algorithms show limited fluctuations with increase in $ \lambda $. The proposed two algorithms,~\ie, iDFST and MCC, reduce the evacuation time obtained with DFST by approximately $ 30 \% $. Note that MCC has better performance when $ \lambda < 3 $. Recall that in Table~\ref{tab:Parameters}, we designed the car following distance of the virtual platoon as $ D_\mathrm{des} = 30 \, m $ and the speed of the virtual leading vehicle as $ v_\mathrm{p} = 10 \, m/s $. Thus, the theoretical vehicle output of the intersection is bounded as $ {D_\mathrm{des}}/{v_\mathrm{p}} = 3 \, s $ per layer. In other words, if the average input vehicle gap $ \lambda < 3 $, the accumulation and queuing of the vehicles is inevitable. Under this circumstance, MCC shows better performance than iDFST because of its better usage of the intersection space. Fig.~\ref{fig:Lambda_MeanTravelTime} shows the ATTD performance of the algorithms. As the traffic volume increases, the vehicle is more likely to be stuck behind other preceding vehicles. Thus, the vehicle travel time is positively correlated with the traffic volume,~\ie, the ATTD decreases as the vehicle arrival gap increases. Similar to the evacuation time, MCC shows better performance when $ \lambda < 3 $. Because of the aforementioned reason for the MCC to show better utilization of the intersection space, idling behavior and fuel consumption performance are greatly improved. As shown in Fig.~\ref{fig:Lambda_EnergyConsumption}, iDFST saves at most $ 15 \% $ and MCC saves $ 26 \% $ of the fuel consumption when compared with DFST.

%Because the MCC problem was proven to be an NP-hard problem, a heuristic method was also proposed to solve this problem with low computation complexity in the case of large number of vehicles. 

\section{Conclusion}
\label{Sec:Conclusion}
In this paper, a graph-based cooperation method was proposed to formulate the conflict-free scheduling problem at unsignalized intersections. {Based on the graphical description of the conflict relationships among the CAVs, an iDFST method was introduced to obtain the local optimal solution. After concise and rigid graphical analysis, we reduced the CAV scheduling problem to the MCC problem, which yielded the global optimal solution. MCC problem was proven to be an NP-hard problem, therefore brute-force method to solve MCC is unrealistic in real-world deployments. To solve the problem, a heuristic method was proposed to solve MCC problem with low computation complexity in the case of a large number of vehicles. Furthermore, a distributed control framework and communication topology were designed to realize the conflict-free cooperation of the vehicles. Traffic simulations proved the effectiveness of the proposed algorithms, where iDFST and MCC algorithms outperform the original DFST method. In the trade-off between scheduling optimality and computation efficiency, although iDFST and MCC cannot guarantee the global optimal solution, they generate rather good CAV arrival plans and have a low computational complexity of $ O(N) $. Thus, both of them are qualified to schedule dozens of CAVs in real intersection scenario deployment. Note that these proposed scheduling algorithms can be applied to intersections with any geometrical structures, which proves the generalization ability of our algorithm.}

A future direction of our research is to permit the lane-changing behavior,~\ie, the CAVs are allowed to change lanes while approaching the intersection. Several practical methods~\cite{cai2021formation, chouhan2020cooperative} have been proposed to apply the lane changing behavior of multiple CAVs. It is interesting to combine these works to further extend the single intersection into multiple intersection cooperation, which will be considered in our future research.  Another interesting topic is the communication topology of the virtual platoon. In this study, we designed a PLF communication topology to realize the scheduling plan. The problem of identifying the best topology is yet to be addressed. {Finally, field experiments are needed for algorithm validation, where more realistic vehicle dynamic model and vehicle trajectory planning in the conflict zone should be further considered.}

%% use section* for acknowledgment
%\section*{Acknowledgment}
%
%
%The authors would like to thank...

% Can use something like this to put references on a page
% by themselves when using endfloat and the captionsoff option.
\ifCLASSOPTIONcaptionsoff
  \newpage
\fi

% trigger a \newpage just before the given reference
% number - used to balance the columns on the last page
% adjust value as needed - may need to be readjusted if
% the document is modified later
%\IEEEtriggeratref{8}
% The "triggered" command can be changed if desired:
%\IEEEtriggercmd{\enlargethispage{-5in}}

% references section

% can use a bibliography generated by BibTeX as a .bbl file
% BibTeX documentation can be easily obtained at:
% http://mirror.ctan.org/biblio/bibtex/contrib/doc/
% The IEEEtran BibTeX style support page is at:
% http://www.michaelshell.org/tex/ieeetran/bibtex/
\bibliographystyle{IEEEtran}
% argument is your BibTeX string definitions and bibliography database(s)
\bibliography{IEEEabrv,mybibfile}
%
% <OR> manually copy in the resultant .bbl file
% set second argument of \begin to the number of references
% (used to reserve space for the reference number labels box)
%\begin{thebibliography}{1}
%
%\bibitem{IEEEhowto:kopka}
%H.~Kopka and P.~W. Daly, \emph{A Guide to \LaTeX}, 3rd~ed.\hskip 1em plus
%  0.5em minus 0.4em\relax Harlow, England: Addison-Wesley, 1999.
%
%\end{thebibliography}

% biography section
% 
% If you have an EPS/PDF photo (graphicx package needed) extra braces are
% needed around the contents of the optional argument to biography to prevent
% the LaTeX parser from getting confused when it sees the complicated
% \includegraphics command within an optional argument. (You could create
% your own custom macro containing the \includegraphics command to make things
% simpler here.)
%\begin{IEEEbiography}[{\includegraphics[width=1in,height=1.25in,clip,keepaspectratio]{mshell}}]{Michael Shell}
% or if you just want to reserve a space for a photo:

\begin{IEEEbiography}[{\includegraphics[width=1in,height=1.25in,clip,keepaspectratio]{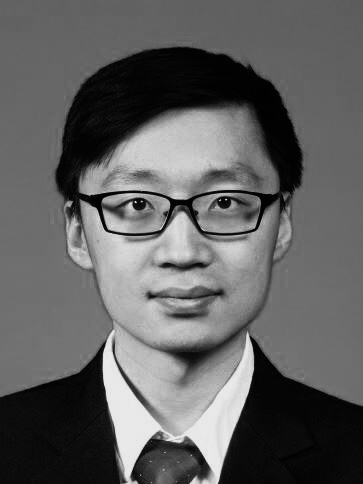}}]
	{Chaoyi Chen} (Graduate Student Member, IEEE) received the B.E. degree from Tsinghua University, Beijing, China, in 2016, and the M.S. from Tsinghua University, Beijing, China, and RWTH Aachen University, Aachen, Germany in 2019. He is currently a Ph.D. student in mechanical engineering with the School of Vehicle and Mobility, Tsinghua University. He was a recipient of the Scholarship of Strategic Partnership RWTH Aachen University and Tsinghua University. His research interests include vehicular networks, control theory, and cooperative control.
\end{IEEEbiography}

%\vspace{-7mm}

\begin{IEEEbiography}[{\includegraphics[width=1in,height=1.25in,clip,keepaspectratio]{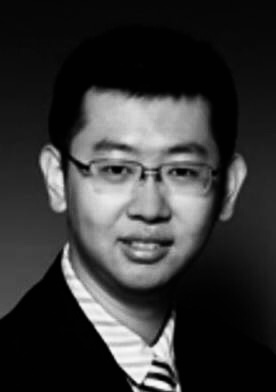}}]
	{Qing Xu} received his B.S. and M.S. degrees in automotive engineering from Beihang University, Beijing, China, in 2006 and 2008, respectively, and the Ph.D. degree in automotive engineering from Beihang University in 2014.
	
	During his Ph.D. research, he worked as a Visiting Scholar with the Department of Mechanical Science and Engineering, University of Illinois at Urbana–Champaign. From 2014 to 2016, he had his postdoctoral research at Tsinghua University. He is currently working as an Assistant Research Professor with the School of Vehicle and Mobility, Tsinghua University. His main research interests include decision and control of intelligent vehicles.
\end{IEEEbiography}

%\vspace{-7mm}

\begin{IEEEbiography}[{\includegraphics[width=1in,height=1.25in,clip,keepaspectratio]{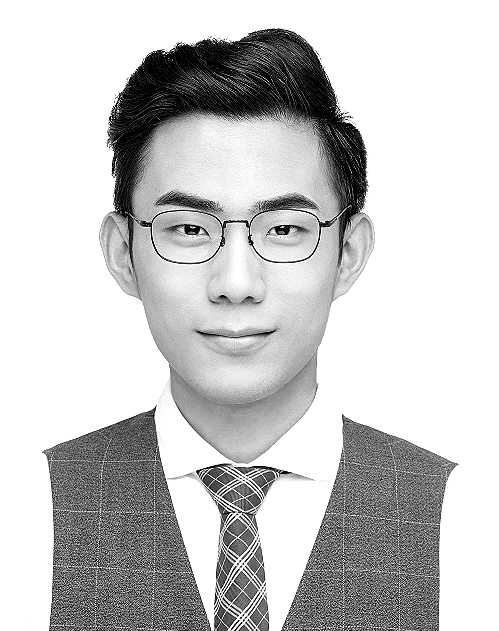}}]
	{Mengchi Cai} (Graduate Student Member, IEEE) received the B.E. degree from Tsinghua University, Beijing, China, in 2018. He is currently a Ph.D. candidate in mechanical engineering with the School of Vehicle and Mobility, Tsinghua University. His research interests include connected and automated vehicles, multi-vehicle formation control, and unsignalized intersection cooperation. 
\end{IEEEbiography}

%\vspace{-7mm}

\begin{IEEEbiography}[{\includegraphics[width=1in,height=1.25in,clip,keepaspectratio]{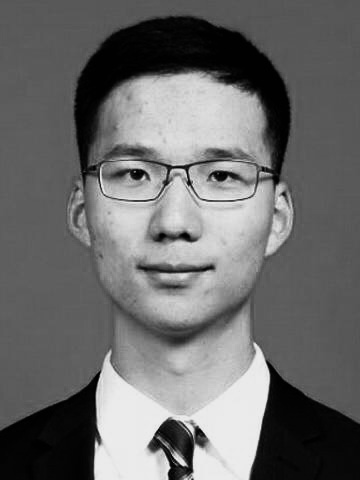}}]{Jiawei Wang} (Graduate Student Member, IEEE) received the B.E. degree from Tsinghua University, Beijing, China, in 2018. He is currently a Ph.D. student in mechanical engineering with the School of Vehicle and Mobility, Tsinghua University. His research interests include connected automated vehicles, distributed control and optimization, and data-driven control. He was a recipient of the National Scholarship in Tsinghua University. He received the Best Paper Award at the 18th COTA International Conference of Transportation Professionals. 
\end{IEEEbiography}

%\vspace{-7mm}

\begin{IEEEbiography}[{\includegraphics[width=1in,height=1.25in,clip,keepaspectratio]{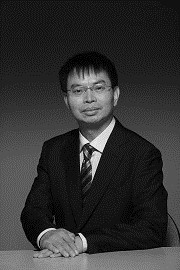}}]{Jianqiang Wang} received the B. Tech. and M.S. degrees from Jilin University of Technology, Changchun, China, in 1994 and 1997, respectively, and the Ph.D. degree from Jilin University, Changchun, in 2002. He is currently a Professor with the School of Vehicle and Mobility, Tsinghua University, Beijing, China. 
	
He has authored over 150 papers and is a co-inventor of 99 patent applications. He was involved in over 10 sponsored projects. His active research interests include intelligent vehicles, driving assistance systems, and driver behavior. He was a recipient of the Best Paper Award in the 2014 IEEE Intelligent Vehicle Symposium, the Best Paper Award in the 14th ITS Asia Pacific Forum, the Best Paper Award in the 2017 IEEE Intelligent Vehicle Symposium, the Changjiang Scholar Program Professor in 2017, the Distinguished Young Scientists of NSF China in 2016, and the New Century Excellent Talents in 2008.
\end{IEEEbiography}

\begin{IEEEbiography}[{\includegraphics[width=1in,height=1.25in,clip,keepaspectratio]{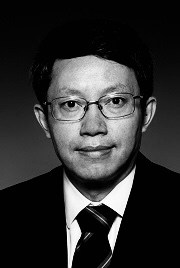}}]{Keqiang Li} received the B.Tech. degree from Tsinghua University of China, Beijing, China, in 1985, and the M.S. and Ph.D. degrees in mechanical engineering from the Chongqing University of China, Chongqing, China, in 1988 and 1995, respectively.
	
He is currently a Professor with the School of Vehicle and Mobility, Tsinghua University. His main research areas include automotive control system, driver assistance system, and networked dynamics and control, and is leading the national key project on CAVs (Intelligent and Connected Vehicles) in China. Dr. Li has authored more than 200 papers and is a co-inventor of over 80 patents in China and Japan.

Dr. Li has served as a Fellow Member of Society of Automotive Engineers of China, editorial boards of the \emph{International Journal of Vehicle Autonomous Systems}, Chairperson of Expert Committee of the China Industrial Technology Innovation Strategic Alliance for CAVs (CACAV), and CTO of China CAV Research Institute Company Ltd. (CCAV). He has been a recipient of Changjiang Scholar Program Professor, National Award for Technological Invention in China, etc.
\end{IEEEbiography}

% You can push biographies down or up by placing
% a \vfill before or after them. The appropriate
% use of \vfill depends on what kind of text is
% on the last page and whether or not the columns
% are being equalized.

%\vfill

% Can be used to pull up biographies so that the bottom of the last one
% is flush with the other column.
%\enlargethispage{-5in}

% that's all folks
\end{document}